\newcounter{1}
\newcounter{2}
\newcounter{3}
\newcounter{4}
\newcommand{\xhdr}[1]{{\noindent\bfseries #1}.}
\DeclareMathOperator*{\argmax}{arg\,max}
\theoremstyle{plain}
\newtheorem{theorem}{Theorem}[section]
\newtheorem{corollary}[theorem]{Corollary}
\theoremstyle{definition}
\theoremstyle{remark}
\icmltitlerunning{Answering Complex Logical Queries on Knowledge Graphs via Query Computation Tree Optimization}
\begin{document}

\twocolumn[
\icmltitle{Answering Complex Logical Queries on Knowledge Graphs \\ via Query Computation Tree Optimization}

\icmlsetsymbol{equal}{*}

\begin{icmlauthorlist}
\icmlauthor{Yushi Bai}{yyy}
\icmlauthor{Xin Lv}{yyy}
\icmlauthor{Juanzi Li}{yyy}
\icmlauthor{Lei Hou}{yyy}
\end{icmlauthorlist}

\icmlaffiliation{yyy}{Department of Computer Science and Technology, BNRist; KIRC, Institute for Artificial Intelligence; Tsinghua University, Beijing 100084, China}

\icmlcorrespondingauthor{Lei Hou}{houlei@tsinghua.edu.cn}

\icmlkeywords{Machine Learning, ICML}

\vskip 0.3in
]

\printAffiliationsAndNotice{}

\begin{abstract}
Answering complex logical queries on incomplete knowledge graphs is a challenging task, and has been widely studied.
Embedding-based methods require training on complex queries and may not generalize well to out-of-distribution query structures.
Recent work frames this task as an end-to-end optimization problem, and it only requires a pretrained link predictor.
However, due to the exponentially large combinatorial search space, the optimal solution can only be approximated, limiting the final accuracy.
In this work, we propose QTO (\textbf{Q}uery Computation \textbf{T}ree \textbf{O}ptimization) that can efficiently find the exact optimal solution.
QTO finds the optimal solution by a forward-backward propagation on the tree-like computation graph, i.e., query computation tree.
In particular, QTO utilizes the independence encoded in the query computation tree to reduce the search space, where only local computations are involved during the optimization procedure.
Experiments on 3 datasets show that QTO obtains state-of-the-art performance on complex query answering, outperforming previous best results by an average of 22\%.
Moreover, QTO can interpret the intermediate solutions for each of the one-hop atoms in the query with over 90\% accuracy.
The code of our paper is at \url{https://github.com/bys0318/QTO}.

\end{abstract}

\section{Introduction}
\label{sec:intro}
Knowledge graph (KG) stores structural knowledge in the form of triplet, which connects a pair of entities (nodes) with a relational edge.
Knowledge graph also supports a variety of downstream tasks, in this paper, we focus on complex logical query answering on knowledge graphs~\cite{hamilton2018embedding, ren2020beta}, which is a fundamental and practical task.

Complex logical queries can be represented with First-Order Logic (FOL) that includes conjunction ($\land$), disjunction ($\lor$), negation ($\lnot$), and existential quantifier ($\exists$).
A more straightforward way is to represent their computation graphs as Directed Acyclic Graphs (DAGs), and solve them by traversing the KG and assigning viable entities to the intermediate variables according to their structures~\cite{dalvi2007efficient,zou2011gstore}.
For example, we show the FOL and DAG representations of a complex logical query ``\emph{Where was the physicist who won the Nobel prize in 1921 born?}'' in Figure~\ref{fig:example}.

\begin{figure}[t]
\centering
\includegraphics[width=1\linewidth,trim=45 70 20 50,clip]{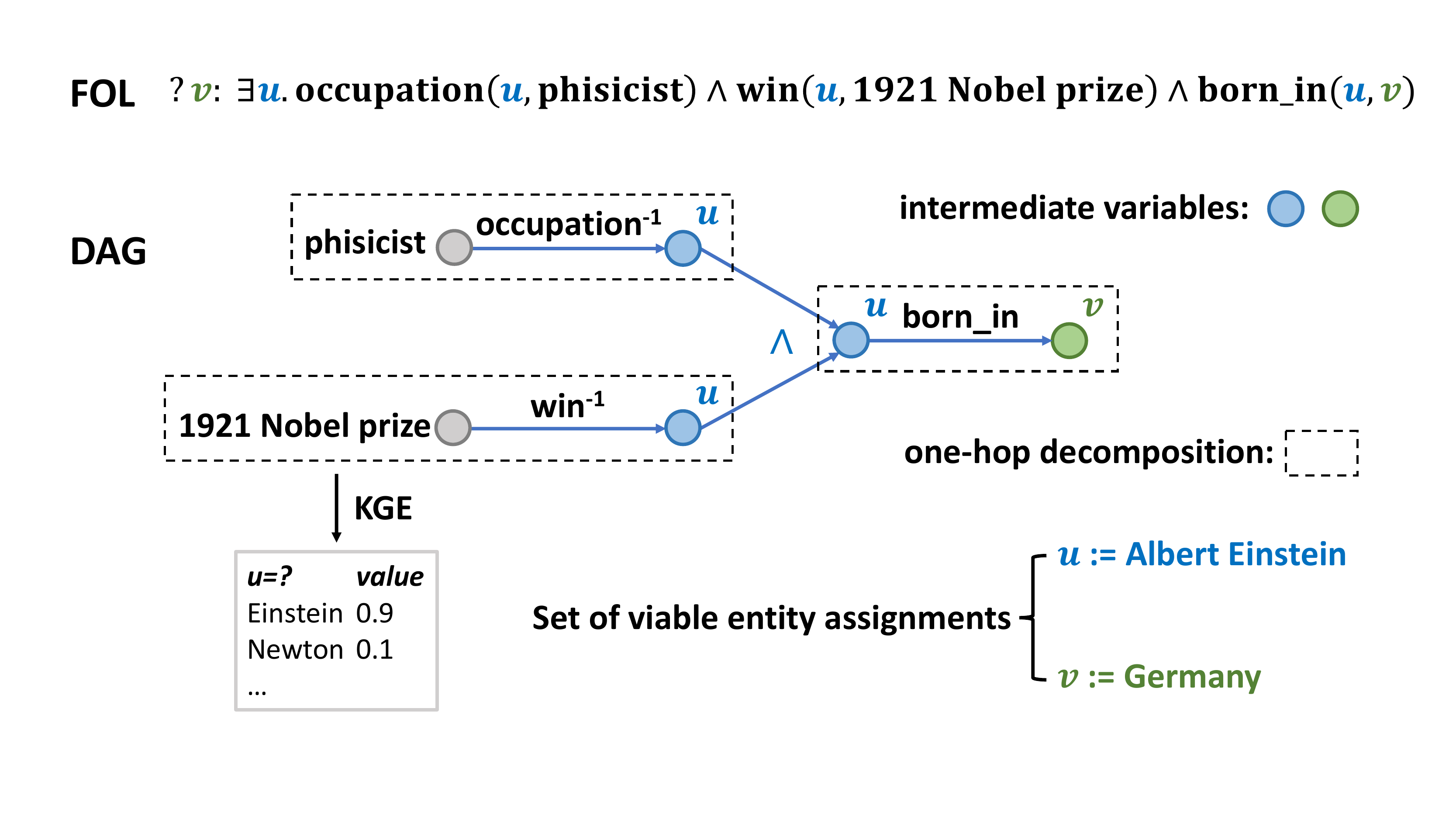}
\caption{FOL and DAG expressions of a complex logical query. The complex query can be decomposed to three one-hop queries, as marked by the dashed boxes. KGE model can answer the one-hop queries with likelihood scores.}
\label{fig:example}
\end{figure}

Real-world KGs often suffer from incompleteness, making it impossible to answer a complex query with missing links by traversing the KG.
Inspired by the success of knowledge graph embedding (KGE)~\cite{bordes2013translating,dettmers2018convolutional,bai2021cone} on answering one-hop KG queries, a line of research proposes to answer complex logical queries by learning embeddings for the intermediate variables~\cite{hamilton2018embedding,ren2019query2box,ren2020beta,zhang2021cone,chen2022fuzzy,zhu2022neural}.
Typically, these methods need to be trained on millions of generated complex logical queries, with large training time overhead, and they struggle to generalize to out-of-distribution (OOD) query structures.
Moreover, they lack necessary interpretability for what entities the intermediate variables stand for.

To these ends, CQD~\cite{arakelyan2021complex} proposes an end-to-end optimization framework, which aims to find a set of entity assignments that maximizes the truth score of a FOL query based on the truth value of each one-hop atom provided by a KGE link predictor (as illustrated in Figure~\ref{fig:example}).
Hence, CQD can explicitly interpret intermediate variables, and it does not require training on complex queries.
However, the search space is exponential to the number of intermediate variables, and CQD alternatively uses restricted discrete search or continuous approximation to \textbf{approximate} the optimal solution. CQD's accuracy is limited due to the approximation.

In this paper, we propose QTO (\textbf{Q}uery Computation \textbf{T}ree \textbf{O}ptimization), which can efficiently find the \textbf{theoretically optimal} solution.
We show that the search space can be greatly reduced by utilizing the independence encoded in the tree-like computation DAG of the query, i.e., query computation tree.
Specifically, QTO recursively maximizes the likelihood of the assignments on the subqueries rooted at variable nodes, and obtains the optimal solution via a forward propagation on the query computation tree.
This divide-and-conquer strategy allows it to efficiently find the exact optimal solution, since only local computations on the tree are involved.
Moreover, a backward pass with $\arg\max$ operation returns the most likely entity assignments for the complex query, bringing interpretability to QTO.
We also provide a theoretical proof for the optimality of our method.

We demonstrate the effectiveness of QTO on complex logical query answering on 3 standard datasets.
Results show that our method outperforms CQD by 30.8\% on average.
More amazingly, even without training on complex queries, our method achieves an average gain of 13.5\% on existential positive first-order queries and 37.5\% on queries with negation, compared to previous state-of-the-art method.
Nevertheless, QTO is the first neural method that can always output the correct answer for easy queries (no missing links along the path), as guaranteed by our theoretical optimality.
For interpretability, QTO can find a set of viable entity assignments with over 90\% success rate.

\section{Related Work}
\label{sec:related}

\xhdr{Knowledge Graph Completion}
Knowledge graph completion aims to infer missing relational links (one-hop queries) in an incomplete KG.
A popular approach for such a challenge is knowledge graph embedding (KGE)~\cite{bordes2013translating,trouillon2016complex,dettmers2018convolutional,sun2019rotate,balavzevic2019tucker,chami2020low,bai2021cone}, which learns to embed entities and relations into vectors, and measures the likelihood of a triplet by a defined scoring function over the corresponding vectors.
The embeddings are learned by optimizing the scoring function such that true triplets obtain higher likelihood scores than false triplets.
Other methods for KG completion includes multi-hop reasoning~\cite{das2018go,lin2018multi,lv2019adapting,bai2022squire}, rule-learning~\cite{yang2017differentiable,chris2019anyburl} and GNNs~\cite{schlichtkrull2018modeling,zhu2021neural}.
In this work, we adopt a pretrained KGE model to calculate the truth values of one-hop queries.

\xhdr{Complex Logical Query Answering}
Complex logical queries are one-hop KG queries combined by logical operators.
Compared to KG completion, it also requires the ability to model sets of entities and logical relationships between sets~\cite{guu2015traversing,hamilton2018embedding}.
Existential Positive First-Order (EPFO) queries include existential qualifier ($\exists$), conjunction ($\land$), and disjunction ($\lor$), while the more general First-Order Logic (FOL) queries also include negation ($\lnot$).
Previous embedding-based methods represent sets of entities as geometric shapes~\cite{hamilton2018embedding,ren2019query2box,zhang2021cone,choudhary2021self}, or probabilistic distribution~\cite{ren2020beta}, and predict the answer by locating nearest neighbors to the answer set representation.
However, embedding-based methods usually lack interpretability as there is no explicit mapping between the embedding and the set of entities.
Moreover, the set representation quality may be compromised when the set is large.
To this end, some works combine more interpretable fuzzy logic to tackle complex query answering~\cite{chen2022fuzzy,zhu2022neural}.
GNN-QE~\cite{zhu2022neural} decomposes the query into relation projections and logical operations over fuzzy sets, and learns a GNN to perform relation projections.

However, training on complex queries is required for all aforementioned methods, which limits their generalization ability to more complicated query structures, and prevents these methods from being improved by utilizing more powerful KG completion models.
CQD~\cite{arakelyan2021complex} proposes an optimization-based framework for answering complex queries without training.
It utilizes a pretrained KGE model to score each query atom, and aggregate all atom scores to evaluate the likelihood of a set of variable assignments.
Since there are exponentially many variable assignments, CQD proposes two strategies to approximate the optimal solution: CQD-beam which uses beam search to generate a sequence of entity assignments, and CQD-CO which optimizes directly on the continuous embeddings.
CQD losses its accuracy to the approximation: for example, if the number of viable entity assignments on an intermediate variable surpasses the beam size, then many viable entity assignments will be cut-off due to the beam restriction.
By contrast, our method can guarantee a theoretically optimal solution under the same problem formulation, which is superior to CQD, and our results also show superiority in both accuracy and efficiency.

\section{Methodology}
\label{sec:model}

\subsection{Preliminaries}
\xhdr{Knowledge Graphs and Knowledge Graph Embeddings}
Knowledge graph $\mathcal{G}=(\mathcal{V}, \mathcal{E})$ contains a set of entities (vertices) $\mathcal{V}$ and a set of relations $\mathcal{R}$. Each directed edge $(h, r, t)\in\mathcal{E}$ represents a triplet fact, where head and tail entities $h, t\in \mathcal{V}$ and relation $r\in \mathcal{R}$.
KGE models learn mappings $\mathcal{V}\rightarrow\mathbb{R}^{d_\mathcal{V}}, \mathcal{R}\rightarrow\mathbb{R}^{d_\mathcal{R}}$ from entities and relations to vectors, and score the likelihood of a triplet $(h, r, t)$ via a functional $f_r(h, t): \mathbb{R}^{d_\mathcal{V}}\times\mathbb{R}^{d_\mathcal{R}}\times\mathbb{R}^{d_\mathcal{V}}\rightarrow\mathbb{R}$ parameterized by $\mathbf{h}, \mathbf{r}, \mathbf{t}$.
For example, in ComplEx~\cite{trouillon2016complex}, $f_r(h, t)=\text{Re}(\langle \mathbf{r}, \mathbf{h}, \bar{\mathbf{t}}\rangle)$ where $\langle\cdot\rangle$ denotes the generalized dot product, $\bar{\cdot}$ denotes conjugate for a complex vector, and $\text{Re}(\cdot)$ denotes taking the real part of a complex vector.

\xhdr{FOL Queries}
Following the definition in~\cite{ren2020beta}, a FOL query $q$ can be written in the following disjunctive normal form:
\begin{align*}
q[v_?]=v_?.\ \exists v_1, \dots, v_N\in \mathcal{V}:\, &(e^1_{1}\land\dots\land e^1_{m_1})\lor\dots\lor \\&(e^n_{1}\land\dots\land e^n_{m_n})
\end{align*}
where $v_1,\dots,v_N$ are variables and $v_?\in\{v_1, \dots, v_N\}$ is the answer variable. Each literal $e^i_{j}$ denotes the truth value of an atomic relational formula on relation $r$ between two entities, or its negation, respectively:
\begin{align*}
e^i_{j} = 
\begin{cases}
    r(c, v)\text{ or }r(v', v) \\
    \lnot r(c, v)\text{ or }\lnot r(v', v)
\end{cases}
\end{align*}
where $v, v'$ are variables, and $c$ is a constant (anchor) entity.

The goal of query answering is to find a viable variable assignment that renders $q$ true. 
KG incompleteness brings uncertainty into the expression, thus $e^i_j$ is no longer a binary variable.
Instead, it represents the likelihood that the relationship holds true, and this generalized truth value is in $[0, 1]$.
To this end, following \cite{arakelyan2021complex}, we formalize this as an optimization problem:
\begin{align*}
    q[v_?]=v_?.\ v_1, \dots, v_N 
    = &\argmax_{v_1, \dots, v_N\in \mathcal{V}}\, (e^1_{1}\top\dots\top e^1_{m_1})\bot \\
    &\dots\bot (e^n_{1}\top\dots\top e^n_{m_n})
\end{align*}
where $e^i_j\in[0, 1]$ is scored by a pretrained KGE model according to the likelihood of the atomic formula.
$\top$ and $\bot$ are generalizations of conjunction and disjunction over fuzzy logic on $[0, 1]$, namely \emph{t-norm} and \emph{t-conorm}~\cite{klir1995fuzzy,klement2013triangular,hajek2013metamathematics}.
In this paper, we use \emph{product t-norm}, where given $a, b\in [0, 1]$: $\top(a, b)=a\cdot b,\ \bot(a, b)=1-(1-a)(1-b)$.

\xhdr{Query Computation Tree}
We can also derive the computation graph of a FOL query (as shown in Figure~\ref{fig:qto}), where variables are represented as nodes, connected by the following four types of directional edges.
Each atomic formula can be represented with \emph{relational projection} ($r(v',v)$) and \emph{anti-relational projection} ($\lnot r(v',v)$), while logical operators serves as \emph{intersection} ($\land$) and \emph{union} ($\lor$) that merge the branches.
In this paper, we consider a subclass of FOL queries whose computation graphs are trees, namely \emph{query computation trees}.
The answer variable and the constant entities in the query correspond to root and leaf nodes in the query computation tree.
Each edge in the query computation tree points from the child node to the parent node.
It can be recursively deduced that the subtree rooted at any non-leaf node in the tree corresponds to a subquery. 
In Appendix~\ref{app:conversion}, we provide a systematical procedure for transforming a FOL query to its query computation tree.

\xhdr{Optimization on Query Computation Tree}
We apply the optimization formalization on a FOL query to its query computation tree.
Let $T(v)\in[0, 1]$ denote the truth value of the subquery rooted at node $v$, thus $T(v_?)$ denotes the truth value of the query. 
If root $v_?$ is merged from its child nodes $\{v_?^1, \dots, v_?^K\}$\footnote{Note that they are the same $v_?$ variable in the FOL query, but are separated to different nodes in the computation graph.} by intersection (type \Roman{1}) or union (type \Roman{2}), respectively, 
\begin{equation}
\begin{aligned}
    T(v_?) = \begin{cases}
    \top_{1\leq i\leq K}(T(v_?^i)), &\text{type \Roman{1}} \\
    \bot_{1\leq i\leq K}(T(v_?^i)), &\text{type \Roman{2}} \\
    \end{cases}
\end{aligned}
\label{eq:qt1}
\end{equation}
If the root $v_?$ is connected by a relational edge $r$ with its child node $c$ or $v_k$, then for relational projection (type \Roman{3}) and anti-relational projection (type \Roman{4}), respectively,
\begin{equation}
\begin{aligned}
    T(v_?) = \begin{cases}
    r(c, v_?)\ \text{or}\ T(v_k)\ \top\ r(v_k, v_?), &\text{type \Roman{3}} \\
    \lnot r(c, v_?)\ \text{or}\ T(v_k)\ \top\ \lnot r(v_k, v_?), &\text{type \Roman{4}} \\
    \end{cases}
\end{aligned}
\label{eq:qt2}
\end{equation}
The truth value of a query computation tree can be derived recursively according to Eq.~\ref{eq:qt1}, \ref{eq:qt2}. The optimization problem is therefore
\begin{align*}
    q[v_?]=v_?.\ v_1, \dots, v_N 
    = &\argmax_{v_1, \dots, v_N\in \mathcal{V}}\,T(v_?)
\end{align*}
\begin{restatable}{proposition}{prop}
The optimization problem defined on the query computation tree is equivalent to that defined on its FOL form.
\label{prop:eq}
\end{restatable}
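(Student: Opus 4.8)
The plan is to prove the stronger statement that the two problems share the \emph{same objective function} of $v_1,\dots,v_N$: once $T(v_?)$ is shown to coincide, \emph{as a function}, with the disjunctive-normal-form expression $(e^1_1\top\dots\top e^1_{m_1})\bot\dots\bot(e^n_1\top\dots\top e^n_{m_n})$, the two $\argmax$ problems are literally the same, so they have the same optimal value and the same set of optimal assignments. I would prove this identity by structural induction on the query computation tree, attaching to each non-leaf node $v$ the subquery $q_v$ carried by the subtree rooted at $v$ (the root giving back $q$ itself) together with its DNF objective. The setup needed is a one-to-one correspondence, furnished by the conversion procedure of Appendix~\ref{app:conversion}, between the literals $e^i_j$ of the DNF and the relational / anti-relational edges of the tree, and between the $n$ disjuncts and the branches merged at the topmost union node.

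For the base case, a node whose only child is an anchor $c$ receives, by Eq.~\ref{eq:qt2}, a single (possibly negated) atom $r(c,v)$, i.e.\ a one-clause, one-literal DNF. In the inductive step I would handle the four edge types of Eq.~\ref{eq:qt1} and~\ref{eq:qt2} in turn. A relational or anti-relational projection onto a variable child $v_k$ (types~\Roman{3} and~\Roman{4}) sets $T(v)=T(v_k)\top r(v_k,v)$ (or with $\lnot r$); because $\top$ is the product, which is associative and commutative, the inductive hypothesis turns this into the DNF of $q_{v_k}$ with the new atom appended to each of its clauses. An intersection node (type~\Roman{1}) with children $v^1,\dots,v^K$ — all denoting the same variable $v$ — sets $T(v)=\top_i T(v^i)$; since the subtrees are vertex-disjoint, every atom of $q_v$ sits on exactly one edge, so by commutativity and associativity of the product the value $\top_i T(v^i)$ is the conjunction of all of them, i.e.\ the DNF obtained by concatenating the clause lists. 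Composing these two cases up a purely conjunctive subtree gives $T(v)=\prod_{\text{atoms of the subtree}} e = e_1\top\dots\top e_m$, exactly the corresponding conjunctive clause.

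The crux is the union node (type~\Roman{2}), where $T(v)=\bot_i T(v^i)$: by associativity of $\bot$ this flattens nested unions correctly, but it reproduces $\bot_i(\top_j e^i_j)$ only when no projection or intersection sits \emph{above} a union inside $q_v$, since the product t-norm is \emph{not} distributive over the t-conorm. This is precisely why the argument must rely on the normalization of Appendix~\ref{app:conversion}, which rewrites the query into disjunctive normal form — pushing every disjunction to the root and duplicating and renaming the bound variables of the affected subtrees so that the result is genuinely a tree — before the computation tree is built; afterwards the only union lies at the root, with $K=n$ branches, and Eq.~\ref{eq:qt1} yields $T(v_?)=\bot_{1\le i\le n}T(v_?^i)=\bot_{1\le i\le n}(e^i_1\top\dots\top e^i_{m_i})$ by the conjunctive case, which is verbatim the FOL-form objective. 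I expect the main obstacle to be exactly this union step: reconciling the tree-local recursion of Eq.~\ref{eq:qt1} and~\ref{eq:qt2} with the non-distributive DNF expansion for patterns such as ``union-then-projection'', where the correspondence only goes through after the conversion procedure has put the query into a form with every disjunction pushed to the top of its (sub)tree.
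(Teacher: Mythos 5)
Your overall strategy (structural induction over the four edge types of Eq.~\ref{eq:qt1},~\ref{eq:qt2}) is the same as the paper's, but the statement you set out to prove --- that $T(v_?)$ coincides \emph{as a function of the assignments} with the DNF objective $(e^1_1\top\dots\top e^1_{m_1})\bot\dots\bot(e^n_1\top\dots\top e^n_{m_n})$ --- is stronger than what the paper proves, and the way you close the union case rests on a false premise about Appendix~\ref{app:conversion}. The conversion procedure does \emph{not} push every disjunction to the root: its union-branches-merging step applies the distributive law in the opposite direction, factoring atoms shared across conjunctions, so unions end up strictly \emph{below} projection edges. The benchmark ``up'' structure (Figure~\ref{fig:qtype}) is exactly such a tree: it computes $(a\bot b)\top c$, while the DNF reads $(a\top c)\bot(b\top c)$, and under the product t-norm/t-conorm these differ, e.g.\ $a=b=0.9$, $c=0.5$ gives $0.495$ versus $0.6975$; comparing against an assignment with $a=0.6$, $b=0$, $c=1$ (both objectives equal $0.6$) even shows the two objectives can have different maximizers. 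So the non-distributivity you correctly flagged as the crux is not neutralized by any normalization available in the paper --- the functional identity you want simply fails for union-below-projection trees, and your union step cannot go through as written.

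The paper's proof avoids this entirely by proving a different identity: by the same four-case induction it shows $T(v_?)=T(Q(v_?))$, where $Q(v_?)$ is the FOL formula read off the tree and its truth value is evaluated \emph{compositionally along that formula's own structure} (each $\land$ as $\top$, each $\lor$ as $\bot$ where it occurs), so distributivity is never invoked and the union case is immediate from Eq.~\ref{eq:qt1}. In other words, the paper matches the tree against the factored form of the query rather than against the DNF-shaped expression, which is precisely the step your stronger formulation cannot take. To salvage your version you would have to restrict to crisp truth values, switch to a distributive pair such as G\"odel (min/max), or redefine the FOL-side objective in factored form --- i.e.\ retreat to essentially what the paper proves.
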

We provide rigorous proof in Appendix~\ref{app:prop}.

\begin{figure*}[t]
\centering
\includegraphics[width=0.9\linewidth,trim=120 60 40 20,clip]{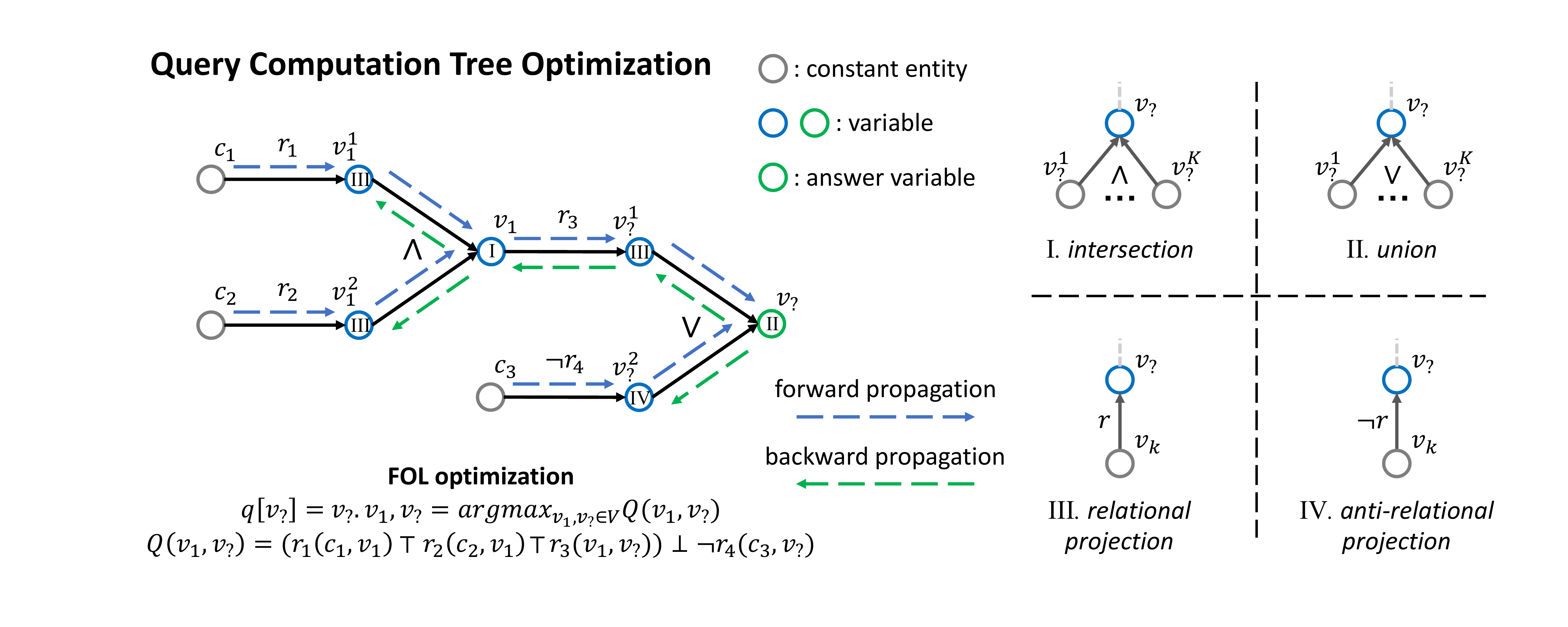}
\caption{Overview of QTO. For a FOL query optimization problem, we find the optimal solution by performing the forward/backward propagation on its tree-like computation graph, i.e., the query computation tree. Four types of local computation are involved, corresponding to the four types of local structure in the query computation tree.}
\label{fig:qto}
\end{figure*}

\subsection{Query Computation Tree Optimization}
We propose QTO (Query Computation Tree Optimization) that executes on the query computation tree to find the optimal set of entity assignments.
We show an overview of QTO in Figure~\ref{fig:qto}.

We define a \emph{neural adjacency matrix} $\mathbf{M}_r\in [0, 1]^{|\mathcal{V}|\times|\mathcal{V}|}$ for each relation $r$: $(\mathbf{M}_r)_{i, j}=r(e_i, e_j)$, where $e_i, e_j\in\mathcal{V}$ are the $i, j$-th entities.
Recall that a KGE model can score the likelihood of the atomic formula via $f_r(e_i, e_j)$, and we need to calibrate the score to a probability between $[0, 1]$.
The calibration function should be monotonically increasing, and if there is an edge $r$ between $e_i$ and $e_j$, the score should be faithfully calibrated to $1$.
Recall that for each triplet $(h, r, t)$, the KGE model is trained by maximizing its normalized probability $\exp(f_r(h, t))/\sum_{e\in\mathcal{V}}\exp(f_r(h, e))$.
Considering that there might be multiple valid tail entities for $(h, r)$, which should all obtain a probability close to 1, hence we multiply the normalized probability by the number of tail entities $N_t = \max\{1, |\{(e_i, r, e)\in\mathcal{E}|e\in\mathcal{V}\}|\}$. 
Note that we set the lower bound to 1 to allow positive scores for long tailed entity $e_i$ that does not have known edges of a certain type $r$ in the training set. Thus, we obtain
\begin{equation}
    \hat{r}(e_i, e_j) = \frac{\exp(f_r(e_i, e_j))\cdot N_t}{\sum_{e\in\mathcal{V}}\exp(f_r(e_i, e))}
\label{eq:norm}
\end{equation}
We further round $\hat{r}(e_i, e_j)$ so that it is between $[0, 1]$ and faithful to real triplets in the KG:
\begin{equation}
    r(e_i, e_j) = \begin{cases}
        1, &\text{if $(e_i, r, e_j)\in\mathcal{E}$} \\
        \min\{\hat{r}(e_i, e_j), 1-\delta\}, &\text{otherwise}
    \end{cases}
\label{eq:nam}
\end{equation}
where we set $\delta=0.0001>0$ to avoid over-confidence on predicted edges. This is also essential for the later corollary~\ref{cor} to hold. Also note that $\mathbf{M}_r$ degenerates to an adjacency matrix if all entries less than 1 are set to 0.
We discuss more ways to derive $\mathbf{M}$ in Appendix~\ref{app:discussion}.

\xhdr{Forward Propagation Procedure}
Let $T^*(v=e)$ denote the maximum truth value for the subquery rooted at node $v$ when $v$ is assigned an entity $e\in\mathcal{V}$.
Enumerate all possible entity assignments, we obtain $\mathbf{T}^*(v)=[T^*(v=e)]_{e\in \mathcal{V}}\in [0, 1]^{|\mathcal{V}|}$.
We maximize the truth value of the query computation tree by recursively deriving $\mathbf{T}^*(v)$ for each node $v$, and finally derive $\mathbf{T}^*(v_?)$ for root $v_?$.
Again, the recursive functions are categorized based on the 4 edge types between the root and its child node, according to Eq.~\ref{eq:qt1}, \ref{eq:qt2}.
Note that in this section, all products between vectors or matrices are performed element-wise, according to Hadamard product $\odot$.
If the root $v_?$ is merged from child nodes $\{v_?^1, \dots, v_?^K\}$ by intersection (type \Roman{1}), then
\begin{equation}
\begin{aligned}
    &T^*(v_?=e) = \top_{1\leq i\leq K}(T^*(v_?^i=e)) \\
    &\Rightarrow \mathbf{T}^*(v_?) = \prod_{1\leq i\leq K}\mathbf{T}^*(v_?^i)
\end{aligned}
\label{eq:f1}
\end{equation}
Similarly, if the root is merged by union (type \Roman{2}), we have
\begin{equation}
\begin{aligned}
    &T^*(v_?=e) = \bot_{1\leq i\leq K}(T^*(v_?^i=e)) \\
    &\Rightarrow \mathbf{T}^*(v_?) = \mathbf{1}-\prod_{1\leq i\leq K}(\mathbf{1}-\mathbf{T}^*(v_?^i))
\end{aligned}
\label{eq:f2}
\end{equation}
where $\textbf{1}\in \mathbb{R}^{|\mathcal{V}|}$ represents the vector of 1s.
Now we consider when the root is connected by a relational edge $r$ with its child $v_k$ (type \Roman{3}). Let $\max_i$ and $\max_j$ denote the max over row and column on a matrix, then the maximal term would be
\begin{equation}
\begin{aligned}
    &T^*(v_?=e) = \max_{e'\in\mathcal{V}}\{T^*(v_k=e')\ \top\ r(e', e)\} \\
    &\Rightarrow\mathbf{T}^*(v_?) = \max_j\left( 
    \begin{pmatrix} 
        \mathbf{T}^*(v_k)^T \cdots \scriptstyle\times |\mathcal{V}|
    \end{pmatrix}
    \odot\mathbf{M}_r\right)
\end{aligned}
\label{eq:f3}
\end{equation}
If the child node is a constant entity $c=e_i$, then
\begin{equation}
\begin{aligned}
    T^*(v_?=e) = r(c, e)
    \Rightarrow \mathbf{T}^*(v_?) = \text{row}_i(\mathbf{M}_r)
\end{aligned}
\label{eq:f3c}
\end{equation}
Similarly, we can derive the recursion for anti-relational projection (type \Roman{4}), here $\mathbf{1}$ denotes an all-one matrix:
\begin{equation}
\begin{aligned}
    &T^*(v_?=e) = \max_{e'\in\mathcal{V}}\{T^*(v_k=e')\ \top\ (1-r(e', e))\} \\
    &\Rightarrow 
    \mathbf{T}^*(v_?) = \max_j\left( 
    \begin{pmatrix} 
        \mathbf{T}^*(v_k)^T \cdots \scriptstyle\times |\mathcal{V}|
    \end{pmatrix}
    \odot(\mathbf{1}-\mathbf{M}_r)\right)
\end{aligned}
\label{eq:f4}
\end{equation}
If the child node is a constant entity $c=e_i$, we have
\begin{equation}
    T^*(v_?=e) = 1-r(c, e)
    \Rightarrow \mathbf{T}^*(v_?) = \text{row}_i(\mathbf{1}-\mathbf{M}_r)
\label{eq:f4c}
\end{equation}
In the forward propagation, the algorithm computes $\mathbf{T}^*(v)$ for each node $v$ from leaf to root in the query computation tree.
Hence, by definition, the solution to the optimization problem is:
\begin{equation}
\begin{aligned}
    &\max_{v_1, \dots, v_N\in\mathcal{V}}T(v_?) = \max\{\mathbf{T}^*(v_?)\} \\
    &q[v_?]= e_t: t=\argmax\{\mathbf{T}^*(v_?)\}
\end{aligned}
\label{eq:opt_root}
\end{equation}
where $\argmax$ on a vector returns the index of the largest element.

\xhdr{Backward Propagation Procedure}
Once a forward pass has calculated $\mathbf{T}^*(v)$ for every node $v$ in the tree, we can obtain the optimal set of entity assignments via a backward pass.
After obtaining the optimal assignment for root $v_?=e_t$ by Eq.~\ref{eq:opt_root}, we can obtain the optimal assignment for its child according to the four edge types. For type \Roman{1} and \Roman{2}, since the child nodes represent the same variable as their parent, we have
\begin{equation}
    v_?^i = e_t,\ i=1, 2,\dots, K
\label{eq:b12}
\end{equation}
While for type \Roman{3}, it can be deduced that
\begin{equation}
\begin{aligned}
    &v_k = \argmax_{e\in\mathcal{V}}\{T^{*}(v_k=e)\ \top\ r(e, e_t)\} \\
    &= e_i: i=\argmax\left(\mathbf{T}^*(v_k)^T\odot\text{col}_t(\mathbf{M}_r)\right)
\end{aligned}
\label{eq:b3}
\end{equation}
Similarly, for negation type \Roman{4}:
\begin{equation}
\begin{aligned}
    &v_k = \argmax_{e\in\mathcal{V}}\{T^{*}(v_k=e)\ \top\ (1-r(e, e_t))\} \\
    &= e_i: i=\argmax\left(\mathbf{T}^*(v_k)^T\odot(\mathbf{1}-\text{col}_t(\mathbf{M}_r))\right)
\end{aligned}
\label{eq:b4}
\end{equation}
Intuitively, the backward procedure finds the optimal entity assignments for intermediate variables from root back to leaf nodes.
Note that the backward computation can be executed for arbitrary assignments of the answer variable, thus providing interpretations for all potential answers.

According to the procedure described above, we show the algorithm of the forward-backward query computation tree optimization in Appendix~\ref{app:qto}.
Formally, we show that the procedure can find the optimal solution, and the full proof is in Appendix~\ref{app:thm}.
\begin{restatable}{theorem}{thm}
The forward procedure in Eq.~\ref{eq:f1}-\ref{eq:opt_root} can find the maximum value of $T(v_?)$, and the backward procedure in Eq.~\ref{eq:b12}-\ref{eq:b4} returns a set of assignments that obtains the optimal truth value.
\label{thm:main}
\end{restatable}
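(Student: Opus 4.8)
The plan is to prove the two claims by induction on the structure of the query computation tree, using the divide-and-conquer decomposition that the tree structure provides. The key invariant I would maintain is that for every node $v$ in the tree and every entity $e \in \mathcal{V}$, the forward-propagation quantity $\mathbf{T}^*(v)_e$ equals $T^*(v=e)$ as defined, i.e., the maximum truth value of the subquery rooted at $v$ over all assignments of the (intermediate) variables in that subtree, subject to $v = e$. Once this invariant is established at the root, Eq.~\ref{eq:opt_root} immediately gives $\max_{v_1,\dots,v_N} T(v_?) = \max\{\mathbf{T}^*(v_?)\}$, settling the forward claim.

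For the forward direction, I would do the induction on subtree height. The base case is a leaf, which is necessarily a constant (anchor) entity; here the parent is reached by a type~\Roman{3} or type~\Roman{4} edge, and Eq.~\ref{eq:f3c}/\ref{eq:f4c} set $\mathbf{T}^*$ to the appropriate row of $\mathbf{M}_r$ or $\mathbf{1} - \mathbf{M}_r$, which is exactly $r(c,e)$ (resp.\ $1 - r(c,e)$) — there are no intermediate variables to maximize over, so this is correct by definition. For the inductive step I would split into the four edge types connecting $v_?$ to its child(ren). For types~\Roman{1} and \Roman{2}, the crucial point is that the subtrees hanging off the distinct child-copies $v_?^1,\dots,v_?^K$ share no intermediate variables other than the common root value $e$ (this is exactly the \emph{independence encoded in the tree}), so maximizing a product of $\top$'s (resp.\ a $\bot$ of the branch values) over disjoint variable sets with the root fixed decomposes as the product (resp.\ co-product) of the per-branch maxima — this is where the tree assumption does the real work, and it is the step I expect to be the main obstacle to state cleanly, since it requires being careful that the variable sets of sibling subtrees are genuinely disjoint and that $\top$/$\bot$ under the product t-norm interact with $\max$ correctly (monotonicity of $\top$ and $\bot$ in each argument is what licenses pushing the $\max$ inside). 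For types~\Roman{3} and \Roman{4}, $T(v_?) = T(v_k) \top r(v_k,v_?)$ (resp.\ with $1-r$), and since the variables of the $v_k$-subtree are disjoint from $\{v_?\}$, maximizing over them with $v_? = e$ fixed gives $\max_{e'}\{T^*(v_k=e') \top r(e',e)\}$, which is precisely the row/column-wise $\max$ of the Hadamard product in Eq.~\ref{eq:f3}/\ref{eq:f4}; I would invoke the inductive hypothesis $\mathbf{T}^*(v_k)_{e'} = T^*(v_k=e')$ here. Appealing to Proposition~\ref{prop:eq} then lets me transfer the conclusion back to the original FOL optimization problem if needed.

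For the backward direction, I would argue that the assignments produced by Eq.~\ref{eq:b12}-\ref{eq:b4}, read from root to leaves, achieve $T(v_?) = \max\{\mathbf{T}^*(v_?)\}$. The argument is again by induction down the tree, with the invariant: if a node $v$ has been assigned value $e$ by the backward pass, then the backward pass restricted to $v$'s subtree yields variable assignments whose induced subquery truth value equals $T^*(v=e) = \mathbf{T}^*(v)_e$. At the root this holds by the choice $t = \argmax\{\mathbf{T}^*(v_?)\}$ and the forward claim. For the inductive step, in each of the four cases the backward rule selects the child value(s) that attain the maximum in the corresponding forward recursion (Eq.~\ref{eq:f1}-\ref{eq:f4}): for types~\Roman{1}/\Roman{2} every child gets the same value $e$, and since the forward recursion was an exact product/co-product of the child values at $e$, the inductive hypothesis on each child plus the structure of $\top$/$\bot$ reconstructs $\mathbf{T}^*(v)_e$; for types~\Roman{3}/\Roman{4} the $\argmax$ in Eq.~\ref{eq:b3}/\ref{eq:b4} picks the $e'$ realizing $\max_{e'}\{\mathbf{T}^*(v_k)_{e'} \top r(e',e)\}$ (matching the forward $\max$), and the inductive hypothesis on $v_k$ gives that its subtree can be instantiated to truth value $\mathbf{T}^*(v_k)_{e'}$, so combining with the edge factor $r(e',e)$ recovers $\mathbf{T}^*(v)_e$ exactly. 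Pushing this all the way to the leaves (constants, which require no assignment and contribute their fixed factor) shows the returned full assignment realizes $\mathbf{T}^*(v_?)_t = \max\{\mathbf{T}^*(v_?)\}$, which by the forward claim is the optimum. The one bookkeeping subtlety I would be careful about is that a child-copy of $v_?$ under a type~\Roman{1}/\Roman{2} merge is forced to the same entity $e$, and I must confirm the forward recursion was derived under exactly that constraint — which it was, since all copies denote the same FOL variable — so no feasibility is lost.
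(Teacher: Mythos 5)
Your proposal is correct and takes essentially the same route as the paper's proof: a structural induction on the query computation tree that first establishes the forward invariant $\mathbf{T}^*(v)_e = T^*(v=e)$ (case split on the four edge types, constants as the base case, Eq.~\ref{eq:opt_root} at the root), followed by a top-down induction showing the backward $\argmax$ choices realize $T^*(v=e_t)$ at every node. The only difference is presentational: you make explicit the disjointness of sibling subtrees' variable sets and the monotonicity of $\top$/$\bot$ that justify pushing the $\max$ inside, points the paper's terser forward step leaves implicit.
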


Also, for easy answers $\mathcal{V}_0$ (all links along the path for deducing the answer exist in the KG), the maximum value $T^*(v_?=e)=1, e\in \mathcal{V}_0$.
By the theorem, our QTO method can find such optimal value and return the set of assignments to obtain it.
As we set an $\delta>0$ in Eq.~\ref{eq:nam}, it holds that $T^*(v_?=e)<1, e\notin \mathcal{V}_0$.
Hence we have the following corollary:
\begin{corollary}
The procedure can always find the easy answers $\mathcal{V}_0\subseteq \mathcal{V}$ for a query, by finding entries of $\mathbf{T}^*(v_?)$ with value $1$.
\label{cor}
\end{corollary}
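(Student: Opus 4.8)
The plan is to prove the two inclusions $\mathcal{V}_0\subseteq\{\,e: T^*(v_?=e)=1\,\}$ and $\{\,e: T^*(v_?=e)=1\,\}\subseteq\mathcal{V}_0$; combined with Theorem~\ref{thm:main} — which already guarantees that the forward pass outputs $T^*(v_?=e)=\max\{T(v_?): v_1,\dots,v_N\in\mathcal{V},\,v_?=e\}$ — this yields the corollary, since the procedure then recovers $\mathcal{V}_0$ as exactly the set of coordinates of $\mathbf{T}^*(v_?)$ that equal $1$. I would prove both inclusions at once by a single induction on the height of the query computation tree: for each node $v$ let $S(v)\subseteq\mathcal{V}$ be the set of answers to the subquery rooted at $v$ that are reachable using only the observed edges of $\mathcal{G}$, and claim that $T^*(v=e)=1$ if and only if $e\in S(v)$; the corollary is then the case $v=v_?$.

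The base case is a projection node whose child is a constant (Eq.~\ref{eq:f3c}), where the claim reduces to Eq.~\ref{eq:nam}: $r(c,e)=1$ precisely when $(c,r,e)\in\mathcal{E}$. For the inductive step I would match the recursions Eq.~\ref{eq:f1}--\ref{eq:f4} to the corresponding notions of traversal-reachability. For the ``if'' direction: given a traversal witness for $e\in S(v)$, every edge it uses is real and hence has value $1$ by Eq.~\ref{eq:nam}, and by the inductive hypothesis every child value it relies on is $1$; since a product t-norm of $1$'s equals $1$ and a t-conorm of a family containing a $1$ equals $1$, propagating up the subtree gives $T^*(v=e)=1$, and it cannot exceed $1$ as all truth values lie in $[0,1]$. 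For the ``only if'' direction: using $\top(a,b)\le\min(a,b)$ for the product t-norm, a value $1$ at an intersection or projection node forces all of its operands to be $1$, while $1-\prod_i(1-a_i)=1$ forces at least one child to be $1$ at a union node; descending into such an ``active'' child, invoking the inductive hypothesis, and using that any non-edge satisfies $r(e',e)\le 1-\delta<1$ — which is exactly where the slack $\delta>0$ in Eq.~\ref{eq:nam} is needed — we conclude that every connecting atom is a genuine triple of $\mathcal{G}$, i.e.\ $e\in S(v)$.

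The main obstacle is negation. An anti-relational atom contributes $1-r(e',e)$, and the calibrated score $r(e',e)$ is always strictly positive: Eq.~\ref{eq:norm} is a quotient of strictly positive quantities, and Eq.~\ref{eq:nam} only ever lowers it to $\min\{\hat{r}(e',e),1-\delta\}>0$. Hence $1-r(e',e)<1$ always, so an anti-relational-projection node (Eq.~\ref{eq:f4} and~\ref{eq:f4c}), and any subquery whose reasoning path runs through one, can never reach truth value $1$; there the characterization fails, since such an $S(v)$ is typically nonempty. Consequently the clean equivalence ``$T^*(v_?=e)=1$ iff $e\in\mathcal{V}_0$'' should be understood on the EPFO fragment — equivalently, for queries whose traversal witnesses use only positive atoms — and I would state this scope at the outset. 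The rest is routine bookkeeping once Theorem~\ref{thm:main} and Eq.~\ref{eq:nam} are available: matching each of Eq.~\ref{eq:f1}--\ref{eq:f4} to its traversal semantics, and the boundedness remark that makes the witnessing assignment optimal.
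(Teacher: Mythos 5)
Your argument is essentially the paper's own: the faithful rounding in Eq.~\ref{eq:nam} gives every existing edge the value $1$ and caps every other entry at $1-\delta<1$, so easy answers attain truth value $1$ while non-easy answers cannot, and Theorem~\ref{thm:main} guarantees that the forward pass computes exactly these maxima; your two-inclusion induction over the tree is a more explicit rendering of the two sentences the paper gives just before Corollary~\ref{cor}. Where you go beyond the paper is the negation caveat, and it is correct: under Eq.~\ref{eq:norm}--\ref{eq:nam} every calibrated score is strictly positive, so any branch passing through an anti-relational projection contributes a factor $1-r(e',e)<1$ and can never reach value exactly $1$, even for answers that are ``easy'' in the benchmark sense, where easiness of a negated atom means the \emph{absence} of a training edge. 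The paper states the corollary for arbitrary queries and glosses over this; its claim $T^*(v_?=e)=1$ for $e\in\mathcal{V}_0$ is literally true only on the positive (EPFO) fragment, or if one additionally invokes the $\epsilon$-thresholding of $\mathbf{M}$ (an implementation detail, not part of the stated calibration) so that non-edges can receive score exactly $0$. So your proof is sound for the scope you state, and the restriction you impose flags a genuine imprecision in the paper's unconditional statement rather than a gap in your own reasoning.
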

In summary, the forward procedure is sufficient for obtaining the answer entity, and the backward procedure is for assigning the intermediate variables to ensure interpretability.
Alternatively, our method can be viewed via a message passing aspect~\cite{pearl1982reverend,pearl2009causality}.
Think of the entity assignment for each variable as state, and relational projection as transition between states.
The query computation tree serves as an inference net where we aim to infer the most likely state of the root variable under the observations on leaf nodes.
From this perspective, we conclude that QTO reduces the combinatorial search space by utilizing the independence encoded in the tree structure, and can efficiently find the optimal solution with only local computations.

\subsection{Discussion}

\xhdr{Space Complexity}
We first consider the memory usage of QTO.
The neural adjacency matrix $\mathbf{M}$ contains $|\mathcal{R}|\cdot|\mathcal{V}|^2$ entries.
We notice that due to the sparsity of the KG, most of the entries in $\mathbf{M}$ have small values, and can be filtered by a threshold $\epsilon>0$ while maintaining precision.
By finding an appropriate $\epsilon$, $\mathbf{M}$ can be efficiently stored on a single GPU.
In Appendix~\ref{app:discussion}, we demonstrate that the filtering procedure significantly reduce the storage of the neural adjacency matrix, while faithfully preserving the majority of the salient information within the matrix.

\begin{table*}[t]
    \centering
    \begin{adjustbox}{width=\textwidth}
    \begin{tabular}{lccccccccccccccccc}
        \toprule
        \bf{Method} & \bf{avg$_p$} & \bf{avg$_{ood}$} & \bf{avg$_n$} & \bf{1p} & \bf{2p} & \bf{3p} & \bf{2i} & \bf{3i} & \bf{pi} & \bf{ip} & \bf{2u} & \bf{up} & \bf{2in} & \bf{3in} & \bf{inp} & \bf{pin} & \bf{pni} \\
        \midrule
        \multicolumn{18}{c}{\textbf{FB15k}} \\
        \midrule
        GQE & 28.0 & 20.1 & - & 54.6 & 15.3 & 10.8 & 39.7 & 51.4 & 27.6 & 19.1 & 22.1 & 11.6 & - & - & - & - & - \\
        Query2Box & 38.0 & 29.3 & - & 68.0 & 21.0 & 14.2 & 55.1 & 66.5 & 39.4 & 26.1 & 35.1 & 16.7 & - & - & - & - & - \\
        BetaE & 41.6 & 34.3 & 11.8 & 65.1 & 25.7 & 24.7 & 55.8 & 66.5 & 43.9 & 28.1 & 40.1 & 25.2 & 14.3 & 14.7 & 11.5 & 6.5 & 12.4 \\
        CQD-CO & 46.9 & 35.3 & - & 89.2 & 25.3 & 13.4 & 74.4 & 78.3 & 44.1 & 33.2 & 41.8 & 21.9 & - & - & - & - & - \\
        CQD-Beam & 58.2 & 49.8 & - & 89.2 & 54.3 & 28.6 & 74.4 & 78.3 & 58.2 & 67.7 & 42.4 & 30.9 & - & - & - & - & - \\
        ConE & 49.8 & 43.4 & 14.8 & 73.3 & 33.8 & 29.2 & 64.4 & 73.7 & 50.9 & 35.7 & 55.7 & 31.4 & 17.9 & 18.7 & 12.5 & 9.8 & 15.1 \\
        GNN-QE & 72.8 & 68.9 & 38.6 & 88.5 & \bf{69.3} & 58.7 & 79.7 & 83.5 & 69.9 & 70.4 & 74.1 & 61.0 & 44.7 & 41.7 & 42.0 & 30.1 & \bf{34.3} \\
        \midrule
        QTO & \bf{74.0} & \bf{71.8} & \bf{49.2} & \bf{89.5} & 67.4 & \bf{58.8} & \bf{80.3} & \bf{83.6} & \bf{75.2} & \bf{74.0} & \bf{76.7} & \bf{61.3} & \bf{61.1} & \bf{61.2} & \bf{47.6} & \bf{48.9} & 27.5 \\
        \midrule[0.08em]
        \multicolumn{18}{c}{\textbf{FB15k-237}} \\
        \midrule
        GQE & 16.3 & 10.3 & - & 35.0 & 7.2 & 5.3 & 23.3 & 34.6 & 16.5 & 10.7 & 8.2 & 5.7 & - & - & - & - & - \\
        Query2Box & 20.1 & 15.7 & - & 40.6 & 9.4 & 6.8 & 29.5 & 42.3 & 21.2 & 12.6 & 11.3 & 7.6 & - & - & - & - & - \\
        BetaE & 20.9 & 14.3 & 5.5 & 39.0 & 10.9 & 10.0 & 28.8 & 42.5 & 22.4 & 12.6 & 12.4 & 9.7 & 5.1 & 7.9 & 7.4 & 3.5 & 3.4 \\
        CQD-CO & 21.8 & 15.6 & - & 46.7 & 9.5 & 6.3 & 31.2 & 40.6 & 23.6 & 16.0 & 14.5 & 8.2 & - & - & - & - & - \\
        CQD-Beam & 22.3 & 15.7 & - & 46.7 & 11.6 & 8.0 & 31.2 & 40.6 & 21.2 & 18.7 & 14.6 & 8.4 & - & - & - & - & - \\
        FuzzQE & 24.0 & 17.4 & 7.8 & 42.8 & 12.9 & 10.3 & 33.3 & 46.9 & 26.9 & 17.8 & 14.6 & 10.3 & 8.5 & 11.6 & 7.8 & 5.2 & 5.8 \\
        ConE & 23.4 & 16.2 & 5.9 & 41.8 & 12.8 & 11.0 & 32.6 & 47.3 & 25.5 & 14.0 & 14.5 & 10.8 & 5.4 & 8.6 & 7.8 & 4.0 & 3.6 \\
        GNN-QE & 26.8 & 19.9 & 10.2 & 42.8 & 14.7 & 11.8 & 38.3 & 54.1 & 31.1 & 18.9 & 16.2 & 13.4 & 10.0 & 16.8 & 9.3 & 7.2 & \bf{7.8} \\
        \midrule
        QTO & \bf{33.5} & \bf{27.6} & \bf{15.5} & \bf{49.0} & \bf{21.4} & \bf{21.2} & \bf{43.1} & \bf{56.8} & \bf{38.1} & \bf{28.0} & \bf{22.7} & \bf{21.4} & \bf{16.8} & \bf{26.7} & \bf{15.1} & \bf{13.6} & 5.4 \\
        \midrule[0.08em]
        \multicolumn{18}{c}{\textbf{NELL995}} \\
        \midrule
        GQE & 18.6 & 12.5 & - & 32.8 & 11.9 & 9.6 & 27.5 & 35.2 & 18.4 & 14.4 & 8.5 & 8.8 & - & - & - & - & - \\
        Query2Box & 22.9 & 15.2 & - & 42.2 & 14.0 & 11.2 & 33.3 & 44.5 & 22.4 & 16.8 & 11.3 & 10.3 & - & - & - & - & - \\
        BetaE & 24.6 & 14.8 & 5.9 & 53.0 & 13.0 & 11.4 & 37.6 & 47.5 & 24.1 & 14.3 & 12.2 & 8.5 & 5.1 & 7.8 & 10.0 & 3.1 & 3.5 \\
        CQD-CO & 28.8 & 20.7 & - & 60.4 & 17.8 & 12.7 & 39.3 & 46.6 & 30.1 & 22.0 & 17.3 & 13.2 & - & - & - & - & - \\
        CQD-Beam & 28.6 & 19.8 & - & 60.4 & 20.6 & 11.6 & 39.3 & 46.6 & 25.4 & 23.9 & 17.5 & 12.2 & - & - & - & - & - \\
        FuzzQE & 27.0 & 18.4 & 7.8 & 47.4 & 17.2 & 14.6 & 39.5 & 49.2 & 26.2 & 20.6 & 15.3 & 12.6 & 7.8 & 9.8 & 11.1 & 4.9 & 5.5 \\
        ConE & 27.2 & 17.6 & 6.4 & 53.1 & 16.1 & 13.9 & 40.0 & 50.8 & 26.3 & 17.5 & 15.3 & 11.3 & 5.7 & 8.1 & 10.8 & 3.5 & 3.9 \\
        GNN-QE & 28.9 & 19.6 & 9.7 & 53.3 & 18.9 & 14.9 & 42.4 & \bf{52.5} & 30.8 & 18.9 & 15.9 & 12.6 & 9.9 & 14.6 & 11.4 & 6.3 & \bf{6.3} \\
        \midrule
        QTO & \bf{32.9} & \bf{24.0} & \bf{12.9} & \bf{60.7} & \bf{24.1} & \bf{21.6} & \bf{42.5} & 50.6 & \bf{31.3} & \bf{26.5} & \bf{20.4} & \bf{17.9} & \bf{13.8} & \bf{17.9} & \bf{16.9} & \bf{9.9} & 5.9 \\
        \bottomrule
    \end{tabular}
    \end{adjustbox}
    \caption{Test MRR results (\%) on complex query answering across all query types. avg$_p$ is the average on EPFO queries; avg$_{ood}$ is the average on out-of-distribution (OOD) queries; avg$_n$ is the average on queries with negation. Results on Hits@1 are in Appendix~\ref{app:h@1}.}
    \label{tb:main}
\end{table*}

\xhdr{Time Complexity}
We further consider the time complexity of our QTO method. During the forward pass, each variable is computed one time with complexity $O(|\mathcal{V}|^2)$, where the bottleneck is on (anti-)relational projection according to Eq.~\ref{eq:f3}, \ref{eq:f4}; during the backward pass, each variable is computed in $O(|\mathcal{V}|)$.
Notice that the KG is usually sparse, and $\mathbf{T}^*(v_k)$ is also a sparse vector (since most entries in $\mathbf{M}$ are filtered to 0), we can implement Eq.~\ref{eq:f3}, \ref{eq:f4} in a more efficient way. 
We take the nonzero entries of $\mathbf{T}^*(v_k)$ and multiply them with the corresponding rows in $\mathbf{M}_r$, obtaining a complexity of $O(|\mathcal{V}|\cdot |\mathbf{T}^*(v_k)>0|)$.
Hence, the total time complexity for a single query is $O(N'|\mathcal{V}|\cdot \max_k|\mathbf{T}^*(v_k)>0|)$, where $N'$ is the number of projections in the query.
In comparison, GNN-QE~\cite{zhu2022neural} has $O(N'(|\mathcal{V}|d^2+|\mathcal{E}|d))$ complexity ($d$ is the embedding dimension), and CQD-beam~\cite{arakelyan2021complex} has $O(N'|\mathcal{V}|bd)$ complexity ($b$ is the beam size).
Moreover, our computational steps only involve matrix operations, which can be effectively accelerated on GPUs.
As we show in Sec.~\ref{sec:efficiency}, QTO achieves faster speed when compared with previous methods in real world KG.
Meanwhile, the pre-computing of the matrix $\mathbf{M}$ can be efficiently parallelized on GPUs.
We combine more discussions on the scalability of QTO in Appendix~\ref{app:discussion}.

\section{Experiments}
\label{sec:experiments}

\subsection{Experimental Setup}
\xhdr{Datasets}
We experiment on three knowledge graph datasets, including FB15k~\cite{bordes2013translating}, FB15k-237~\cite{toutanova2015observed}, NELL995~\cite{xiong2017deeppath}.
Detailed statistics of the datasets are listed in Appendix~\ref{app:stat}.
We use the standard FOL queries generated in~\cite{ren2019query2box,ren2020beta}, consisting of 9 types of EPFO queries (1p/2p/3p/2i/3i/pi/ip/2u/up) and 5 types of queries with negation (2in/3in/inp/pin/pni).
Specifically, `p', `i', and `u' stand for `projection', `intersection', and `union' in the query structure (we show a detailed statistic of the query types in Appendix~\ref{app:query}).

\xhdr{Baselines}
We compare QTO against state-of-the-art methods on complex query answering, including GQE~\cite{hamilton2018embedding}, Query2Box~\cite{ren2019query2box}, BetaE~\cite{ren2020beta}, CQD-CO~\cite{arakelyan2021complex}, CQD-beam~\cite{arakelyan2021complex}, ConE~\cite{zhang2021cone}, FuzzQE~\cite{chen2022fuzzy}, and GNN-QE~\cite{zhu2022neural}.

\begin{figure*}[t]
\centering
\subfigure{
\begin{minipage}{0.33\linewidth}
\centering
\includegraphics[width=2.2in]{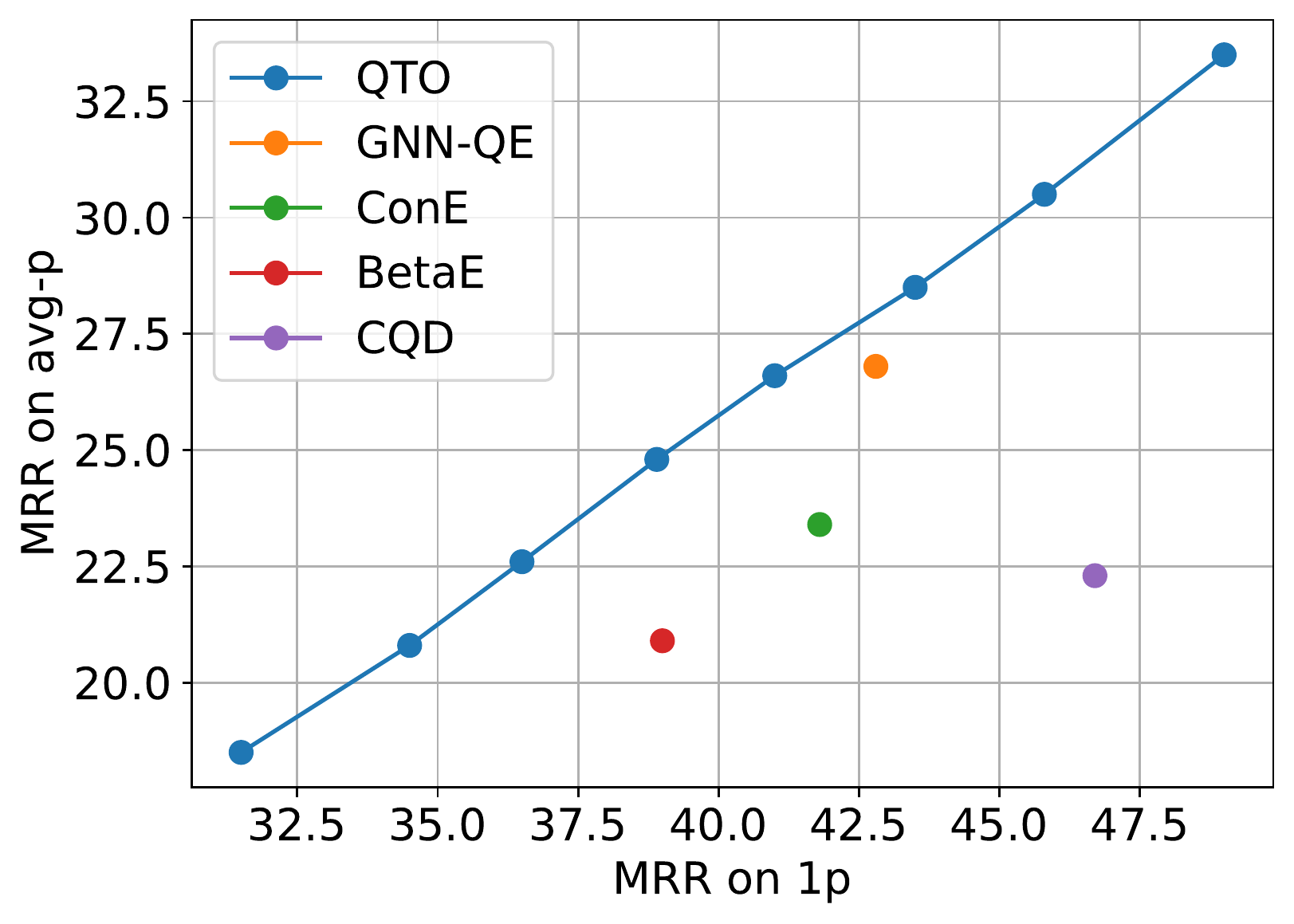}
\label{fig:p-1p}
\end{minipage}%
}%
\subfigure{
\begin{minipage}{0.33\linewidth}
\centering
\includegraphics[width=2.2in]{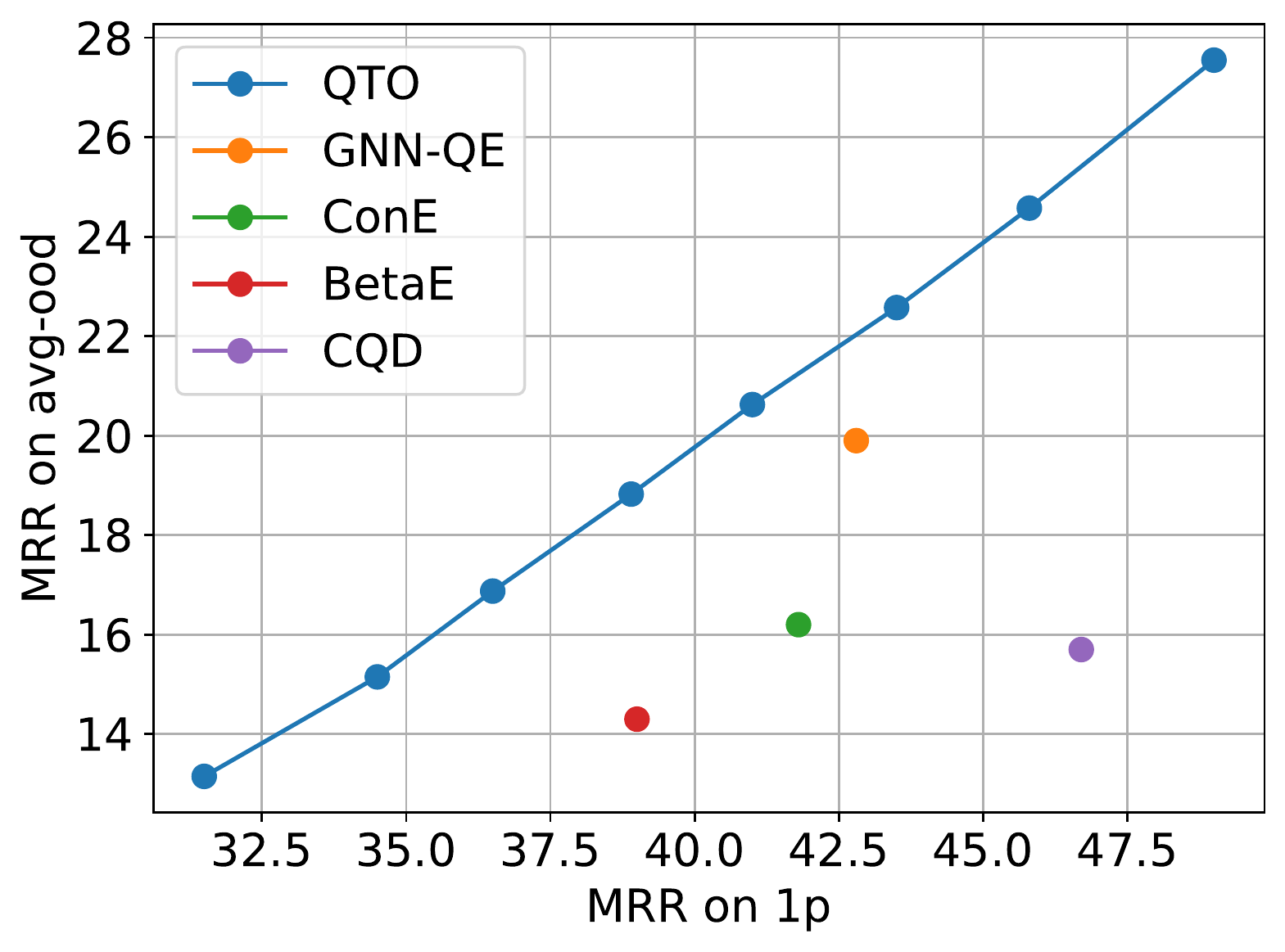}
\label{fig:ood-1p}
\end{minipage}%
}%
\subfigure{
\begin{minipage}{0.33\linewidth}
\centering
\includegraphics[width=2.2in]{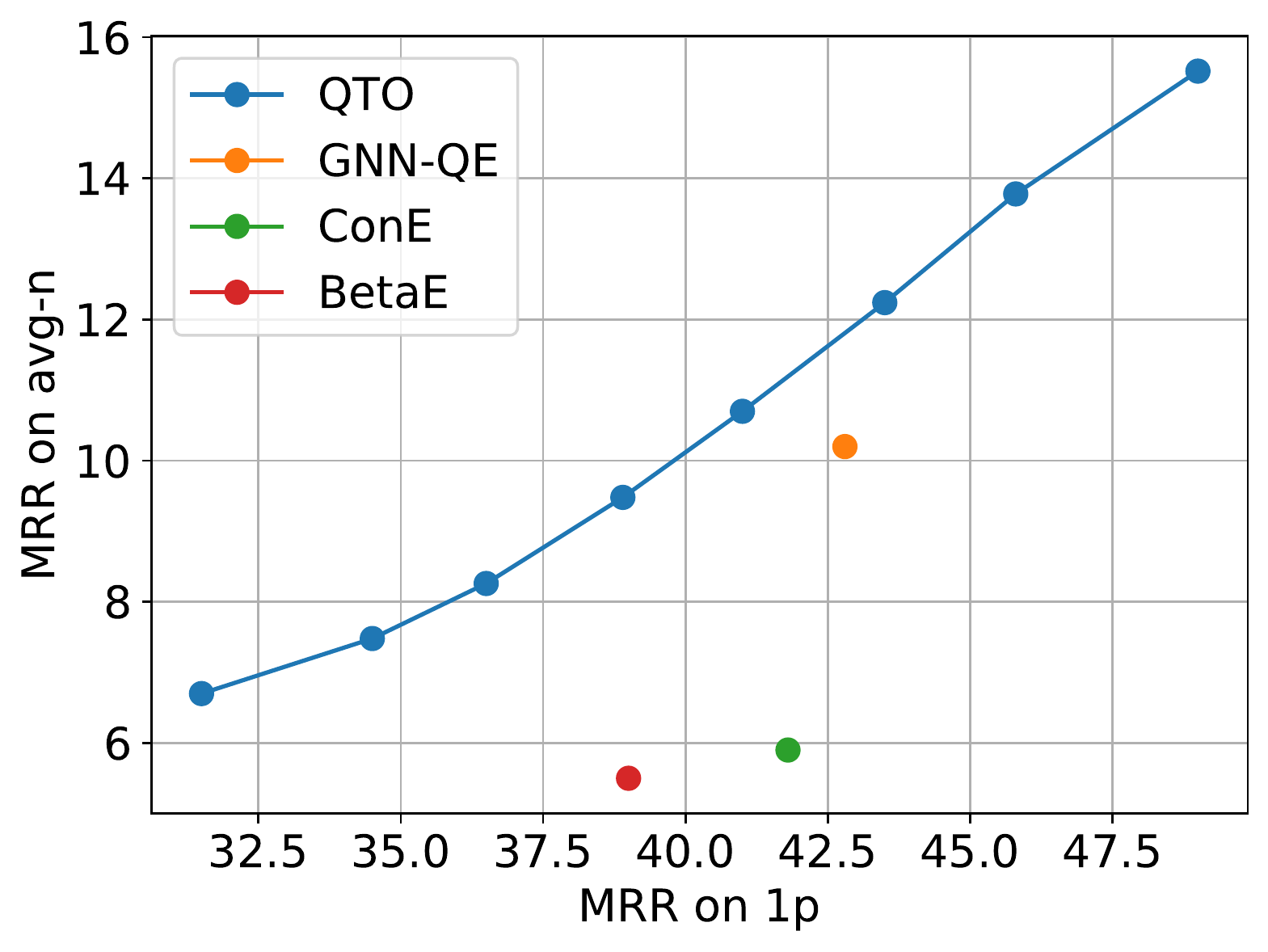}
\label{fig:n-1p}
\end{minipage}%
}%
\centering
\vspace{-7mm}
\caption{Test MRR on avg$_p$, avg$_{ood}$, and avg$_n$ w.r.t. MRR on 1p (one-hop) queries, evaluated on FB15k-237.}
\label{fig:avg-1p}
\end{figure*}

\xhdr{Evaluation Protocol}
For each complex query, its answers are divided into \emph{easy answers} and \emph{hard answers}, based on whether the answer can be derived by existing edges in the graph~\cite{ren2019query2box}.
Specifically, in valid/test set, the easy answers are the entities that can be inferred by edges in training/valid graph, while hard answers are those that can be inferred by predicting missing edges in valid/test graph.
We calculate standard evaluation metrics including mean reciprocal rank (MRR) and Hits at K (Hits@K) on hard answers, in the filtered setting where all easy and hard answers are filtered out during ranking.
QTO ranks the final answer entity according to $\mathbf{T}^*(v_?)$.

\xhdr{Implementation Details}
To answer queries on each of the KG, we first train a KGE model on its training graph.
We use ComplEx~\cite{trouillon2016complex} trained with N3 regularizor~\cite{lacroix2018canonical} and auxiliary relation prediction task~\cite{chen2021relation}.
Then the neural adjacency matrix $\textbf{M}$ is calculated based on the scores given by the KGE model, and calibrated according to Eq.~\ref{eq:norm}, \ref{eq:nam}.
To save the memory usage of $\textbf{M}$, we find an appropriate $\epsilon$ such that after filtering all entries $<\epsilon$, the sparse matrix can be stored on a single GPU.
We further observe that the entries in the neural adjacency matrix are small (due to the Softmax operation over all entities), and QTO may fail to filter the tail entities (reached by relational edges) for the anti-relational atoms in queries with negation.
Hence we scale the matrix by $\alpha$ on queries with negation: $\textbf{M}=\min\{\textbf{1}, \alpha\cdot\textbf{M}\}$, and search the best $\alpha$ among integers in $[1, 10]$ based on the performance on valid queries. We report the implementation details and hyperparameters in Appendix~\ref{app:detail}.

\subsection{Results on Complex Query Answering}
\label{sec:result}
Table~\ref{tb:main} reports the complex query answering results on the three datasets.
Previous baselines, except CQD, are trained on 1p/2p/3p/2i/3i queries, hence the other 4 types of EPFO queries serve as OOD queries, and we report the average result on these queries in avg$_{ood}$.
We observe that QTO significantly outperforms baseline methods across all datasets.
Notably, even without additional training on complex queries, QTO yields a relative gain of 13.5\%, 21.8\%, and 37.5\% on avg$_{p}$, avg$_{ood}$, and avg$_{n}$, compared to previous Sota method GNN-QE.
This suggests that our method has better reasoning skills and can generalize to more query structures.
We further observe that QTO outperforms CQD by a large margin, suggesting the effectiveness of our exact optimization method.

Moreover, we show that QTO generalizes better from one-hop answering, and its performance can benefit from a more powerful one-hop answering KGE model.
Figure~\ref{fig:avg-1p} plots the performance comparison on complex queries w.r.t. the performance on 1p (one-hop) queries, on FB15k-237 (we also provide the plots for each type of complex queries in Appendix~\ref{app:plot})\footnote{We vary the one-hop query answering model by taking the ComplEx KGE model at different epochs (1, 2, 4, 10, 25, 70, 250).}.
We make three key observations: (a) The curve of QTO lies on the top left to baseline methods, suggesting it generalizes better from one-hop answering to complex queries;
(b) Such a gap between QTO and CQD is significant, indicating our optimization method better leverages the KGE model;
(c) Previous trainable methods suffer from the coupling of one-hop queries with multi-hop queries during training, preventing them from getting higher accuracies.
In contrast, QTO disentangles them and can be directly benefited from better one-hop models.

Additionally, we investigate the effect of the hyperparameters $\epsilon, \alpha$ in QTO. 
We discover that the threshold $\epsilon$ poses a trade-off between memory usage and model accuracy.
As we decrease the value of $\epsilon$, we observe a corresponding increase in model accuracy. However, this improvement in accuracy comes at the expense of a rise in memory consumption, particularly within the neural adjacency matrix.
Further, the negation scaling coefficient $\epsilon$ has an effect on queries with negation.
We provide the detailed results in Table~\ref{tb:eps}, \ref{tb:alpha}. 

To provide more insights on QTO's extensibility and facilitate a more equitable comparison with prior methods, we conduct a series of experiments employing QTO alongside various KGE models, including TransE~\cite{bordes2013translating}, RotatE~\cite{sun2019rotate}, ComplEx+N3~\cite{lacroix2018canonical} without the auxiliary relation prediction task, and NBFNet~\cite{zhu2021neural}.\footnote{ComplEx+N3 and NBFNet are the KGE backbones of CQD and GNN-QE, respectively.}
The empirical results (shown in Table~\ref{tb:kge}) implies that the efficacy of QTO can be amplified by utilizing a more powerful one-hop answering KGE model.
Intriguingly, even under the same KGE backbone, QTO outperforms CQD and GNN-QE, suggesting the effectiveness of our optimization method.

\subsection{Interpretability Study}
As Thm~\ref{thm:main} suggests, our backward propagation procedure can find the most likely entity assignments on intermediate variables for any answer variable assignment $v_?=e$, which serves as an explanation for how the answer is derived.
In comparison, CQD-beam can only explain its predicted answer (number bounded by beam size), but cannot explain for an arbitrary answer.
We measure the interpretability of the method by the accuracy of its explanations for the hard answers.
Specifically, for each set of entity assignments, we can check whether the assignments are valid according to the full graph, i.e., whether the FOL expression under such valuation is true.\footnote{The assignments for 1p/2i/3i/2u/2in/3in queries are trivially true when the answer is true, hence we evaluate on other types.}
Table~\ref{tb:interpretability} reports the accuracy rate of QTO on Hits@K answers that the model predicts, as well as on all true answers.
Specifically, the accuracy corresponding to Hits@K is calculated by determining the mean accuracy of the interpretations, whilst assigning Hits@K predictions (true answers predicted within top-K) to the answer variable. 
The ``All'' category in the table represents the mean accuracy on all true answers, equivalent to Hits@$|\mathcal{V}|$.
The results suggest that accuracy improves for smaller values of K. 
This observation stems from the fact that higher-ranking predictions receive higher scores, thereby reflecting a greater degree of confidence in their intermediate variable interpretations.
Notably, we find that QTO can provide valid explanations for over 90\% of the Hits@1 answers it predicts.
Moreover, by observing the interpretation of QTO's answer prediction, we can analyze the failure behavior and open the black box behind complex query answering.
We show case studies of QTO's interpretation in Appendix~\ref{app:case}.

\begin{table}[t]
    \centering
    \begin{adjustbox}{width=0.48\textwidth}
    \begin{tabular}{lcccccccc}
        \toprule
        & \bf{2p} & \bf{3p} & \bf{pi} & \bf{ip} & \bf{up} & \bf{inp} & \bf{pin} & \bf{pni} \\
        \midrule
        on Hits@1 & 88.6 & 85.1 & 93.9 & 91.3 & 90.8 & 81.9 & 90.3 & 93.5\\
        on Hits@3 & 83.7 & 79.0 & 92.5 & 88.4 & 85.7 & 76.0 & 87.0 & 94.4 \\
        on Hits@10 & 77.8 & 72.3 & 90.8 & 85.7 & 79.1 & 71.0 & 82.5 & 95.4 \\
        on All & 65.7 & 56.7 & 84.3 & 78.7 & 64.8 & 52.7 & 68.5 & 94.3 \\
        \bottomrule
    \end{tabular}
    \end{adjustbox}
    \caption{Accuracy (\%) on intermediate variable interpretation given the Hits@K answers that QTO predicts and all true answers, evaluated on FB15k-237. Results on other datasets are in Appendix~\ref{app:inter}.}
    \label{tb:interpretability}
\end{table}

\begin{table}[t]
    \centering
    \begin{adjustbox}{width=0.48\textwidth}
    \begin{tabular}{lcccccccccc}
        \toprule
        \bf{Method} & \bf{avg} & \bf{1p} & \bf{2p} & \bf{3p} & \bf{2i} & \bf{3i} & \bf{pi} & \bf{ip} & \bf{2u} & \bf{up} \\
        \midrule
        GNN-QE & 36.5 & 40.9 & 23.6 & 27.4 & 34.8 & 53.4 & 39.9 & 60.0 & 27.8 & 20.3 \\
        QTO & \bf{28.2} & \bf{34.0} & \bf{20.9} & \bf{21.2} & \bf{33.4} & \bf{50.0} & \bf{33.8} & \bf{32.1} & \bf{12.9} & \bf{15.7} \\
        \bottomrule
    \end{tabular}
    \end{adjustbox}
    \caption{MAPE (\%, $\downarrow$) on answer set cardinality prediction, evaluated on FB15k-237. Results on other datasets are in Appendix~\ref{app:card}.}
    \label{tb:mape}
\end{table}
\vspace{-2mm}

\begin{table*}[t]
    \centering
    \begin{adjustbox}{width=0.95\textwidth}
    \begin{tabular}{lcccccccccccccccc}
    \toprule
    \bf{Method} & \bf{avg$_p$} & \bf{avg$_n$} & \bf{1p} & \bf{2p} & \bf{3p} & \bf{2i} & \bf{3i} & \bf{pi} & \bf{ip} & \bf{2u} & \bf{up} & \bf{2in} & \bf{3in} & \bf{inp} & \bf{pin} & \bf{pni} \\
    \midrule
    CQD & .457 & - & .782 & .534 & .378 & .549 & .473 & .338 & .212 & .604 & .243 & - & - & - & - & - \\
    GNN-QE & .878 & .984 & .716 & .974 & .973 & .848 & .620 & .874 & .902 & .999 & .994 & .995 & .988 & .964 & .977 & .996 \\
    QTO & 1.00 & 1.00 & 1.00 & 1.00 & 1.00 & 1.00 & 1.00 & 1.00 & 1.00 & 1.00 & 1.00 & 1.00 & 1.00 & 1.00 & 1.00 & 1.00 \\
    \toprule
    \end{tabular}
    \end{adjustbox}
    \caption{Valid Hits@1 results (\%) on \emph{easy} complex query answering across all query types, evaluated on FB15k-237.}
    \label{tb:easy}
\end{table*}

\begin{table*}[t]
    \centering
    \begin{adjustbox}{width=0.75\textwidth}
    \begin{tabular}{lccccccccccccc}
    \toprule
    \bf{Method}  & \bf{2p} & \bf{3p} & \bf{2i} & \bf{3i} & \bf{pi} & \bf{ip} & \bf{2u} & \bf{up} & \bf{2in} & \bf{3in} & \bf{inp} & \bf{pin} & \bf{pni} \\
    \midrule
    CQD & 24.8 & 37.6 & 12.4 & 9.6 & 41.2 & 84.8 & 9.6 & 25.4 & - & - & - & - & - \\
    GNN-QE & 9.0 & 13.6 & 9.2 & 14.4 & 13.4 & 13.6 & 9.2 & 13.4 & 9.2 & 14.0 & 13.8 & 13.8 & 13.8 \\
    QTO & 7.6 & 13.6 & 3.6 & 4.2 & 8.4 & 7.8 & 3.8 & 8.8 & 3.6 & 4.6 & 9.2 & 8.8 & 10.6 \\
    \toprule
    \end{tabular}
    \end{adjustbox}
    \caption{Inference time (\emph{ms}/query) on each type of query on FB15k-237, evaluated on one RTX 3090 GPU.}
    \label{tb:time}
\end{table*}

\subsection{Predicting the Cardinality of Answer Sets}

Predicting the cardinality of answer sets is also a practical yet challenging task.
For example, consider the question ``\emph{How many phisicists won the 1921 Nobel prize?}'', which directly asks for the number of entity assignments for the answer variable.
Our method can predict the cardinality of the answer set by $|\mathbf{1}[\mathbf{T}^*(v_?)>T]|$ for some threshold $T\in[0, 1]$.
We select the best threshold $T\in\{0.1, 0.2, \dots, 0.9\}$ according to the best validation accuracy.
Table~\ref{tb:mape} reports the mean absolute percentage error (MAPE) between QTO's cardinality prediction and the groundtruth on FB15k-237.
We only compare with GNN-QE since it is the only previous method that can predict the cardinality without explicit supervision~\cite{zhu2022neural}.
As shown, QTO outperforms GNN-QE by a large margin on cardinality prediction.

\begin{figure}[t]
\centering
\includegraphics[width=0.8\linewidth,trim=0 0 0 0,clip]{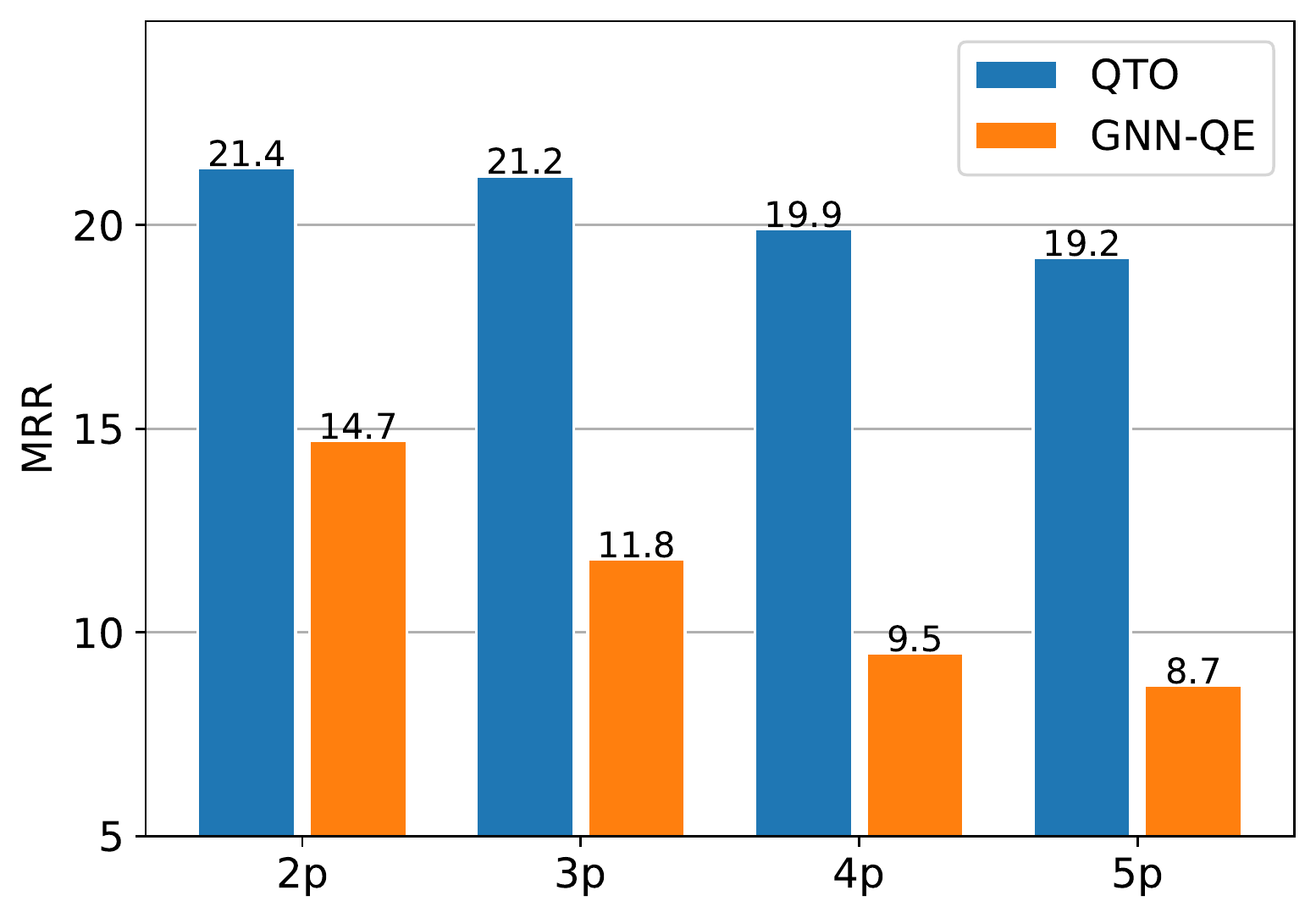}
\vspace{-2mm}
\caption{MRR on queries of 2, 3, 4, and 5 hops on FB15k-237.}
\label{fig:hop}
\end{figure}

\subsection{Analyses on Reasoning Skills and Efficiency}
\label{sec:efficiency}
We conduct a more comprehensive evaluation on QTO's reasoning skills on longer queries and easy queries.
We also involve an efficiency comparison between QTO and baseline methods.

\xhdr{Reasoning on Longer Queries}
Our main experiment shows the performance of QTO on 1, 2, and 3 hop queries. A natural question would be, can our method generalize to queries with longer hops?
We show the performance comparison between QTO and the previous Sota method GNN-QE on 2-5 hop queries, i.e., 2p/3p/4p/5p queries, in Figure~\ref{fig:hop}.
We see that the performance of GNN-QE significantly reduces with the number of hops, while QTO is more resistant to the increasing length of the reasoning chain.

\xhdr{Reasoning on Easy Queries}
We stated in Corollary~\ref{cor} that our method can always find the easy answers, thus they are all at Hits@1 in QTO's prediction.
Intuitively, QTO automatically degenerates to perform as a traditional searching algorithm if we set $\mathbf{M}$ as the adjacency matrix, while introducing uncertainty into $\mathbf{M}$ allows the method to perform probabilistic reasoning.
However, this is not the case for previous methods, as we report their Hits@1 on easy answers in Table~\ref{tb:easy}.
The results indicate that trainable method such as GNN-QE does not memorize the existing edges in the KG well, while optimization-based CQD losses accuracy even on easy answers to the approximation.
In contrast, QTO can faithfully answer complex queries based on existing edges, due to our theoretical guarantee.

\xhdr{Reasoning efficiency}
One common concern for QTO is its efficiency, since it involves (sparse) matrix products between matrices with sizes of $|\mathcal{V}|\times|\mathcal{V}|$.
We report the inference time on each type of query in Table~\ref{tb:time}.
As we can see, benefiting from GPU acceleration on matrix product, QTO achieves even higher efficiency compared to previous methods.

\section{Conclusion}
\label{sec:conclusion}

This paper proposes a novel optimization-based method QTO (Query Computation Tree Optimization) for answering complex logical queries on knowledge graphs.
QTO utilizes a pretrained KGE link predictor to score each one-hop atom, and can efficiently find an optimal set of entity assignment that maximizes the truth value of the complex query.
In particular, QTO optimizes directly on the tree-like computation graph, and searches for the optimal solution by a forward-backward propagation procedure.
Experiments show that QTO significantly outperforms previous methods.
Moreover, QTO is the first neural method that can explicit interpret the intermediate variables for any answer, and is faithful to the existing relationships in the graph.

\section*{Limitation}
Although our method provides more accurate answers for logical queries on KGs, it still suffers from two limitations.
The first pertains to scalability: the process of obtaining a pre-computed adjacency matrix can be both time-consuming and resource-intensive, especially for larger KGs. 
We provide some insights and future directions to mitigate such issue in Appendix~\ref{app:discussion}.
Another challenge lies in the restricted nature of supported query types.
To elaborate, our optimization method is only compatible with tree-like query structures, i.e., the query computation tree.
Unfortunately, it lacks the ability to accommodate cyclic query structures, or queries that contain more than one answer variables.

\section*{Acknowledgement}
This work is supported by a grant from the Institute for Guo Qiang, Tsinghua University (2019GQB0003), and the NSFC Youth Project (62006136).
We gracefully thank all our anonymous reviewers for their fruitful suggestions.

\bibliography{anthology}
\bibliographystyle{icml2023}

\newpage
\appendix

\appendix
\onecolumn

\section{Conversion Between FOL Expression and Query Computation Tree}
\label{app:conversion}
The conversion from a FOL expression (disjunctive normal form) to its query computation tree involves three steps: dependency graph generation, union branches merging, and variable separation.

\xhdr{Dependency Graph Generation}
Given a FOL expression, we first assign a unique node to each of the variables, and a unique node to the constant entity in each of the one-hop atoms. Note that there might be several nodes that correspond to the same constant entity, since they appear in different one-hop atoms.
Then we use undirected edges to connect the nodes according to the one-hop atoms.
Specifically, if $e^i_j=r(v', v)$ (or $r(c, v)$), then we connect the nodes of $v'$ (or $c$) and $v$ by an edge $r_i$.
Similarly, if $e^i_j=\lnot r(v', v)$ (or $\lnot r(c, v)$), then we connect the nodes of $v'$ (or $c$) and $v$ by an edge $\lnot r_i$.
The notation $i$ here will be used to distinguish the edges from different conjunctions.
The constructed undirected dependency multigraph has to be a tree, in other words, it is a connected acyclic graph.
We take the node $v_?$ as root, and assign directions for the edges such that they all point from child nodes to their parent nodes, during which we have to handle the inverse of relations.
The constant entities are naturally leaf nodes in the tree, since each entity node is only connected to one variable node.

\xhdr{Union Branches Merging}
Then we handle the union structures in the query computation tree.
On the path $\tau$ from root to every leaf node, if exists, we find the first node $v_i$ such that the edges between $v_i$ and its child node $v_j$ are all of the same relation, but in different conjunctions: $r_{t_1}, r_{t_2}, \dots, r_{t_p}$.
We merge these edges into a single edge $r_{t_1, t_2, \dots t_p}$, since they all correspond to the same one-hop atom but in different conjunctions, they can be merged by the distributive law:
\begin{equation}
    (P\land Q)\lor(P\land R) \Leftrightarrow P\land(Q\lor R)
\end{equation}
We assert that there is a subpath from $v_i$ to some $v_k$ within the path $\tau$ that only consists of edges $r_{t_1, t_2, \dots t_p}$, and $v_k$ is connected to different child nodes by relations from conjunctions $t_1, t_2, \dots t_p$.
We mark these edges as union, while the rest of one-to-many structures are marked as intersection.
Now the multigraph becomes a simple graph (no multiple edges).

\xhdr{Variable Separation}
For the one-to-many intersection/union structures in the tree, we separate the parent node $v_k$ into $v^1_k, v^2_k, \dots$ for each of the child nodes, and connect the $i$th child node with $v^i_k$ by its original relational edge, while all $v^1_k, v^2_k, \dots$ connect to $v_k$ by intersection/union edges.
Note that the union branches merging step may create one-to-many structures that consist of both intersection and union edges, take Figure~\ref{fig:conversion} for an example.
This can be taken care of by first separating $v_k$ into a union structure ($v^3_1$ and $v^4_1$ in the example), and then separating the child node into an intersection structure ($v^1_1$ and $v^2_1$ in the example).

We show an example of converting a FOL expression to its query computation tree in Figure~\ref{fig:conversion}. One can verify that the logic expression represented by the query computation tree is equivalent to the FOL expression.

\begin{figure}[htbp]
\centering
\includegraphics[width=1\linewidth,trim=0 60 0 20,clip]{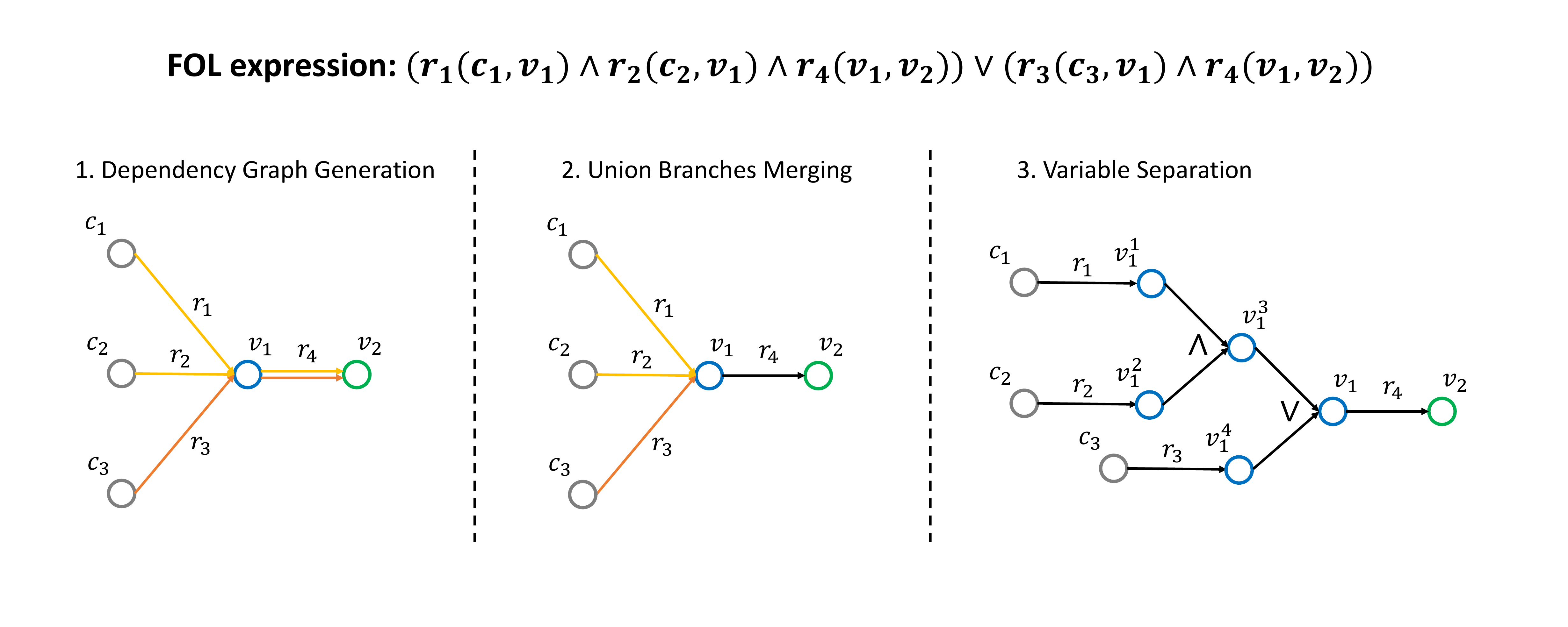}
\caption{Conversion from a FOL expression to its query computation tree.}
\label{fig:conversion}
\end{figure}

Moreover, we believe that \textbf{query computation tree actually serves as a more appropriate form to represent complex logical questions on KGs}.
Although the FOL expression is more general, most kinds of the FOL expressions are rarely asked in practice.
In fact, query computation tree is more in line with human's multi-hop questioning style --- where an intermediate variable is usually described by its relationships with other entities or their logical combinations, which corresponds to a subtree rooted at the variable in the query computation tree.

\section{Proof of \cref{prop:eq}: Equivalence of Optimization on the Query Computation Tree}
\label{app:prop}

\prop*
\begin{proof}
    Recall that $T(v_?)$ denotes the truth value of the query computation tree rooted at $v_?$, thus to prove the equivalence between the two optimization problems, it suffices to show that $T(v_?)=T(Q(v_?))$ where $T(q(v_?))$ is the truth value of the FOL expression corresponding to the query computation tree rooted at $v_?$.
    We prove it by induction. Assume the equation holds for the child nodes of $v_?$, then for $v_?$, consider the 4 types of edges connecting it to its child nodes.
    For type \Roman{1} (intersection), then according to Eq.~\ref{eq:qt1}, it holds that
    \begin{equation}
        Q(v_?) = \land_{1\leq i\leq K}Q(v^i_?)\ \Rightarrow\ 
        T(Q(v_?)) = \top_{1\leq i\leq K}T(v^i_?) = T(v_?)
    \end{equation}
    For type \Roman{2} (union), according to Eq.~\ref{eq:qt1}, we also have
    \begin{equation}
        Q(v_?) = \lor_{1\leq i\leq K}Q(v^i_?)\ \Rightarrow\ 
        T(Q(v_?)) = \bot_{1\leq i\leq K}T(v^i_?) = T(v_?)
    \end{equation}
    For type \Roman{3} (relational projection), according to Eq.~\ref{eq:qt2},
    \begin{equation}
        Q(v_?) = r(c, v_?)\text{ or }Q(v_k)\land r(v_k, v_?) \ \Rightarrow\ 
        T(Q(v_?)) = r(c, v_?)\text{ or }T(Q(v_k))\ \top\ r(v_k, v_?) = T(v_?)
    \label{eq:app3}
    \end{equation}
    Similarly, for type \Roman{4} (anti-relational projection), according to Eq.~\ref{eq:qt2},
    \begin{equation}
        Q(v_?) = \lnot r(c, v_?)\text{ or }Q(v_k)\land \lnot r(v_k, v_?) \ \Rightarrow\ 
        T(Q(v_?)) = \lnot r(c, v_?)\text{ or }T(Q(v_k))\ \top\ \lnot r(v_k, v_?) = T(v_?)
    \label{eq:app4}
    \end{equation}
The trivial case is when the child node of $v_?$ is a constant entity, and it also holds true according to Eq.~\ref{eq:app3}, \ref{eq:app4}.
Therefore, $T(v_?)=T(Q(v_?))$ is always true, indicating that the optimization on query computation tree is equivalent to the optimization on its FOL expression.
\end{proof}

\section{Query Computation Tree Optimization Algorithm}
\label{app:qto}

\begin{minipage}[!h]{\textwidth}
\begin{minipage}{.48\textwidth}
\centering
\begin{algorithm}[H]
\small
\caption{Forward Propagation Function}
\label{alg:forward}
\begin{algorithmic}
   \Function{Forward}{$v$, $M$, $T$}
   \State $V\leftarrow 1^{|\mathcal{V}|}$
   \If {$v$.type = \emph{intersection}}
   \For {each node $u$ in $v$.child}
   \State $U, T\leftarrow$ \Call{Forward}{$u$, $M$, $T$}
   \State $V\leftarrow V * U$
   \EndFor
   \EndIf
   \If {$v$.type = \emph{union}}
   \For {each node $u$ in $v$.child}
   \State $U, T\leftarrow$ \Call{Forward}{$u$, $M$, $T$}
   \State $V\leftarrow V * (1^{|\mathcal{V}|}-U)$
   \EndFor
   \State $V\leftarrow 1^{|\mathcal{V}|}-V$
   \EndIf
   \If {$v$.type = \emph{relational}}
   \If {$v$.child is a constant entity $i$}
   \State $V\leftarrow M[v.r][i][:]$
   \Else
   \State $U, T\leftarrow$ \Call{Forward}{$v$.child, $M$, $T$}
   \State $V\leftarrow \max_j\{U^T * M[v.r][:][:]\}$
   \EndIf
   \EndIf
   \If {$v$.type = \emph{anti-relational}}
   \If {$v$.child is a constant entity $i$}
   \State $V\leftarrow 1^{\mathcal{V}}-M[v.r][i][:]$
   \Else
   \State $U, T\leftarrow$ \Call{Forward}{$v$.child, $M$, $T$}
   \State $V\leftarrow \max_j\{U^T * (1^{\mathcal{V}\times\mathcal{V}} - M[v.r][:][:])\}$
   \EndIf
   \EndIf
   \State $T(v)\leftarrow V$ \\
   \Return $V, T$
   \EndFunction
\end{algorithmic}
\end{algorithm}
\vfill
\end{minipage}
\hfill
\begin{minipage}{.48\textwidth}
\centering
\begin{algorithm}[H]
\small
\caption{Backward Propagation Function}
\label{alg:backward}
\begin{algorithmic}
   \Function{Backward}{$v$, $M$, $T$, $E$, $t$}
   \State $E(v)\leftarrow t$
   \If {$v$.type = \emph{intersection} or \emph{union}}
   \For {each node $u$ in $v$.child}
   \State $E\leftarrow$\Call{Backward}{$u$, $M$, $T$, $E$, $t$}
   \EndFor
   \EndIf
   \If {$v$.type = \emph{relational}}
   \If {$v$.child is not a constant node}
   \State $u\leftarrow v$.child
   \State $t'\leftarrow \argmax\{T(u)^T * M[v.r][:][t]\}$
   \State $E\leftarrow$\Call{Backward}{$u$, $M$, $T$, $E$, $t'$}
   \EndIf
   \EndIf
   \If {$v$.type = \emph{anti-relational}}
   \If {$v$.child is not a constant node}
   \State $u\leftarrow v$.child
   \State $t'\leftarrow \argmax\{T(u)^T * (1^{|\mathcal{V}|\times1}-M[v.r][:][t])\}$
   \State $E\leftarrow$\Call{Backward}{$u$, $M$, $T$, $E$, $t'$}
   \EndIf
   \EndIf
   \Return $E$
   \EndFunction
\end{algorithmic}
\end{algorithm}
\begin{algorithm}[H]
\small
\caption{Query Computation Tree Optimization}
\label{alg:qto}
\begin{algorithmic}
   \State \textbf{Input:} neural adjacency matrix $M$, nodes in the query computation tree (each node has \emph{child} and \emph{type} members), and root $v_?$
   \State \textbf{Output: } optimal truth values $T$ on root (our defined $\mathbf{T}^*(v_?)$ vector), and the optimal assignments $E$ for $v_?=t$
   \State $T, E\leftarrow \{\}$
   \State $V, T\leftarrow$\Call{Forward}{$v_?$, $M$, $T$}
   \State $E\leftarrow$\Call{Backward}{$v_?$, $M$, $T$, $E$, $t$} \\
   \Return $T, E$
\end{algorithmic}
\end{algorithm}
\vfill
\end{minipage}
\end{minipage}

Alg.~\ref{alg:forward} shows the forward propagation function, which recursively calls itself on the child nodes to obtain the optimal truth values of the child subtrees, until reaching a leaf node (constant entity).
Alg.~\ref{alg:backward} shows the backward propagation function, which also recursively calls itself on the child nodes to compute the optimal assignments for the child subtrees, until reaching a leaf node.
Our QTO algorithm is shown in Alg.~\ref{alg:qto}, which calls the forward-propagation procedures to solve the optimization problem.

\section{Proof of \cref{thm:main}: Optimality of QTO}
\label{app:thm}

\thm*
\begin{proof}
    First, we show the optimality of the forward procedure by proving that the derived $\mathbf{T}^*(v)$ is indeed the maximum truth value of the subquery rooted at $v$.
    We prove it by induction: suppose it holds for the subquery rooted at child nodes of $v_?$, then according to the derivation process in Eq.~\ref{eq:f1}-\ref{eq:f4c}, it still holds for $v_?$.
    The trivial case is when the child node of $v_?$ is a constant entity, which also holds true according to Eq.~\ref{eq:f3c}, \ref{eq:f4c}.
    Therefore, according to the definition of $\mathbf{T}^*(v_?)$, the maximum truth value for the query can be obtained by Eq.~\ref{eq:opt_root}.

    Next, we show that the backward procedure can find a set of assignments that obtains the optimal value.
    We prove it by induction on the root of the query subtree.
    Suppose our backward procedure can find the optimal set of assignments for the subquery rooted at every child node $v$ of $v_?$, when the child node is assigned an arbitrary $e\in\mathcal{V}$. In other words, the truth value of such a set of assignments is $T(v)=T^*(v=e)$.
    Then we consider the procedure on the subquery rooted at $v_?$ when $v_?$ is assigned an arbitrary $e_t\in\mathcal{V}$. For type \Roman{1} (intersection), by Eq.~\ref{eq:qt1}, \ref{eq:b12}, the truth value $T(v_?)$ under the returned assignments is:
    \begin{equation}
        T(v_?) = \top_{1\leq i\leq K}(T(v_?^i)) = \top_{1\leq i\leq K}(T^*(v_?^i=e_t))
    \end{equation}
    while
    \begin{equation}
        T^*(v_?=e_t) = \max\{\top_{1\leq i\leq K}(T(v_?^i=e_t))\} = \top_{1\leq i\leq K}(T^*(v_?^i=e_t))
    \end{equation}
    Hence $T(v_?)=T^*(v_?=e_t)$. Similarly, it also holds for type \Roman{2} (union):
    \begin{equation}
        T(v_?) = \bot_{1\leq i\leq K}(T(v_?^i)) = \bot_{1\leq i\leq K}(T^*(v_?^i=e_t)) = T^*(v_?=e_t)
    \end{equation}
    For type \Roman{3} (relational projection), by Eq.~\ref{eq:qt2}, \ref{eq:b3}, and the induction hypothesis, we have
    \begin{equation}
    \begin{aligned}
        &T(v_?) = T^*(v_k=e)\ \top\ r(e, e_t),\ e=\argmax_{e\in\mathcal{V}}\{T^{*}(v_k=e)\ \top\ r(e, e_t)\} \\
        &\Rightarrow T(v_?) = \max_{e\in\mathcal{V}}\{T^*(v_k=e)\ \top\ r(e, e_t)\}
    \end{aligned}
    \end{equation}
    while by definition of $T^*$, it holds that
    \begin{equation}
        T^*(v_?=e_t) = \max\{T(v_k)\ \top\ r(v_k, e_t)\}
        = \max_{e\in\mathcal{V}}\{T^*(v_k=e)\ \top\ r(e, e_t)\}
    \end{equation}
    Thus $T(v_?)=T^*(v_?=e_t)$. Similarly, for type \Roman{4} (anti-relational projection), by Eq.~\ref{eq:qt2}, \ref{eq:b4}, it still holds:
    \begin{equation}
    \begin{aligned}
        &T(v_?) = T^*(v_k=e)\ \top\ (1-r(e, e_t)),\ e=\argmax_{e\in\mathcal{V}}\{T^{*}(v_k=e)\ \top\ (1-r(e, e_t))\} \\
        &\Rightarrow T(v_?) = \max_{e\in\mathcal{V}}\{T^*(v_k=e)\ \top\ (1-r(e, e_t))\}=T^*(v_?=e_t)
    \end{aligned}
    \end{equation}
    Hence, the induction hypothesis holds for $v_?$.
    The trivial case is when $v_?=c$ is a constant entity, and its truth value is simply $T(c)=T^*(c)=1$.
    Therefore, by Eq.~\ref{eq:opt_root}, the backward procedure from root $v_?$ returns a set of assignments that obtains a truth value of 
    \begin{equation}
        T(v_?)=T^*(v_?=e),\ e=\argmax_{e\in\mathcal{V}}\{T^*(v_?=e)\}
        \Rightarrow T(v_?)=\max_{e\in\mathcal{V}}\{T^*(v_?=e)\}
    \end{equation}
    which is the optimal truth value.
\end{proof}

\section{Discussions on Neural Adjacency Matrix}
\label{app:discussion}
\xhdr{Neural Adjacency Matrix}
An intriguing question is, can our method learn the parameters in the neural adjacency matrix?
In fact, our method supports optimization on the parameters, since each step is differentiable.
However, we should not directly optimize the neural adjacency matrix, since it does not generalize to missing links.
More specifically, during the learning procedure, the entries corresponding to already existing edges are pushed to 1 while other entries to 0, since such a matrix would induce a correct answer.
A more promising direction is to learn the calibration function that transforms the KGE scores to probabilities between $[0, 1]$ for the neural adjacency matrix, also known as KGE calibration~\cite{safavi2020evaluating}.
In our work, the calibration function is fixed, as shown in Eq.~\ref{eq:norm}, \ref{eq:nam}, but more functionals can be defined to learn soft ``threshold'' or ``stretching'' to better calibrate KGE scores to complex query answering.

\xhdr{Storage}
We store the $|\mathcal{R}|\times|\mathcal{V}|\times|\mathcal{V}|$ neural adjacency matrix as a sparse tensor since most of its entries are 0 (filtered out by threshold $\epsilon$). For example, with an $\epsilon$ of $0.0002$ on FB15k-237, only 1\% of all entries are nonzero. We study the overlap between the neural adjacency matrix and the truth adjacency matrix. On triplets in the training graph, the faithful rounding (Eq.~\ref{eq:nam}) ensures the corresponding entries in the neural adjacency matrix are 1. On triplets in valid+test graph, 99.1\% of the corresponding entries have a nonzero value (after filtering with $\epsilon=0.0002$), and the mean value of all the entries is 0.19. Meanwhile, on false triplets (no such edges in the full graph), only 1.0\% of the corresponding entries have a nonzero value, and the mean value of all the entries is 1e-5. The gap between the corresponding entries in the neural adjacency matrix for existing edges and non-existing edges is sufficient to distinguish between true and false triplets.

\xhdr{Scalability}
For a given KG, its neural adjacency matrix $\mathbf{M}$ is pre-computed by a pretrained KGE model and then saved for query answering.
The pre-computing step on FB15k, FB15k-237, and NELL995 take 40mins, 7mins, and 7hrs, which are all carried out on one RTX-3090 GPU.
We see that the pre-computing time scales to the number of entities and relations in the KG.
A possible way to mitigate this problem may be to first predict what relations each entity possesses based on the concept of the entity or other learnable property of the entity.
Another way is to simplify $\mathbf{M}$ as a block matrix, and compute each block in the corresponding subgraph.
We leave the scalability of QTO to large KGs for future work.

\section{Experiment Details}

\subsection{Dataset Statistics}
\label{app:stat}

\begin{table*}[!h]
\centering
\begin{tabular}{lccccc}
\toprule
\bf{Dataset} & \bf{\#Entities} & \bf{\#Relations} & \bf{\#Training edges} & \bf{\#Valid edges} & \bf{\#Test edges} \\
\midrule
FB15k & 14,951 & 1,345 & 483,142 & 50,000 & 59,071 \\
FB15k-237 & 14,505 & 237 & 272,115 & 17,526 & 20,438 \\
NELL995 & 63,361 & 200 & 114,213 & 14,324 & 14,267 \\
\toprule
\end{tabular}
\caption{Statistics of the three knowledge graph datasets.}
\label{tb:stat}
\end{table*}

Table~\ref{tb:stat} summarizes the statistics of the three datasets in our experiments.
Note that the inverse of each relation is also added to the graph, and can appear in the queries.

\subsection{Query Structures}
\label{app:query}

\begin{figure}[htbp]
\centering
\includegraphics[width=1\linewidth,trim=0 0 0 20,clip]{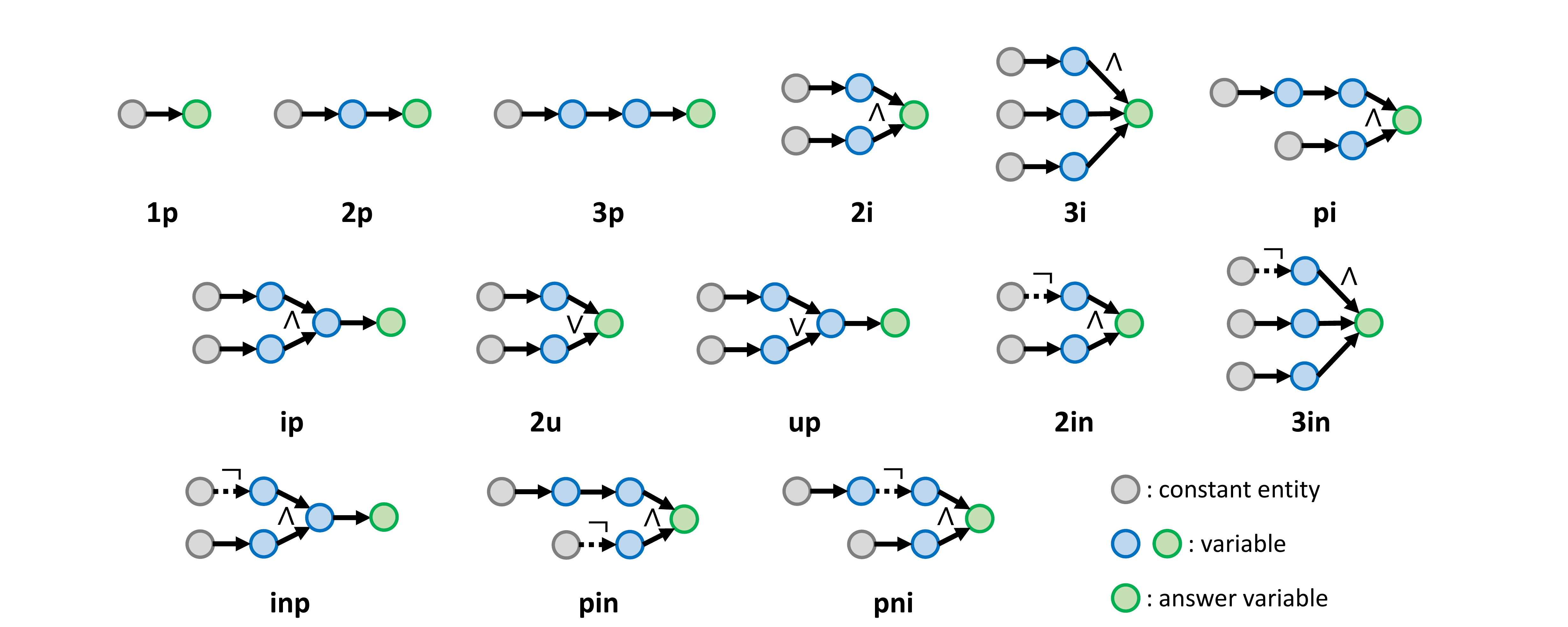}
\caption{Query structures, illustrated in their query computation tree representations.}
\label{fig:qtype}
\end{figure}

We use the 14 types of complex queries generated in~\cite{ren2019query2box, ren2020beta}. We show the query computation tree for each type of query in Figure~\ref{fig:qtype}.

\subsection{Implementation Details}
\label{app:detail}

\begin{table*}[!h]
\centering
\begin{tabular}{lcccccccc}
\toprule
 & \multicolumn{6}{c}{KGE (ComplEx)} & \multicolumn{2}{c}{QTO} \\
\cmidrule(l){2-7} \cmidrule(l){8-9}
& $d$ & $lr$ & $b$ & $\lambda$ & $w$ & epoch & $\epsilon$ & $\alpha$ \\
\midrule
FB15k & 1000 & 0.1 & 100 & 0.01 & 0.1 & 100 & 0.001 & 6 \\
FB15k-237 & 1000 & 0.1 & 1000 & 0.05 & 4 & 100 & 0.0002 & 3 \\
NELL995 & 1000 & 0.1 & 1000 & 0.05 & 0 & 100 & 0.0002 & 6 \\
\toprule
\end{tabular}
\caption{Hyperparameters of pretrained KGE and QTO.}
\label{tb:hyper}
\end{table*}

We provide the best hyperparameters of the pretrained KGE~\footnote{We utilize the KGE implementation from \href{https://github.com/facebookresearch/ssl-relation-prediction}{https://github.com/facebookresearch/ssl-relation-prediction}~\cite{chen2021relation}.} and QTO in Table~\ref{tb:hyper}.
The hyperparameters for KGE, which is a ComplEx model~\cite{trouillon2016complex} trained with N3 regularizor~\cite{lacroix2018canonical} and auxiliary relation prediction task~\cite{chen2021relation}, include embedding dimension $d$, learning rate $lr$, batch size $b$, regularization strength $\lambda$, auxiliary relation prediction weight $w$, and the number of epochs.
We recall that the hyperparameters in our QTO method include the threshold $\epsilon$ and the negation scaling coefficient $\alpha$.

\section{More Experimental Results}

\subsection{Hits@1 on Complex Query Answering}
\label{app:h@1}

\begin{table*}[!h]
    \centering
    \begin{adjustbox}{width=\textwidth}
    \begin{tabular}{lccccccccccccccccc}
        \toprule
        \bf{Model} & \bf{avg$_p$} & \bf{avg$_{ood}$} & \bf{avg$_n$} & \bf{1p} & \bf{2p} & \bf{3p} & \bf{2i} & \bf{3i} & \bf{pi} & \bf{ip} & \bf{2u} & \bf{up} & \bf{2in} & \bf{3in} & \bf{inp} & \bf{pin} & \bf{pni} \\
        \midrule
        \multicolumn{18}{c}{FB15k} \\
        \midrule
        GQE & 16.6 & 11.0 & - & 34.2 & 8.3 & 5.0 & 23.8 & 34.9 & 15.5 & 11.2 & 11.5 & 5.6 & - & - & - & - & - \\
        Query2Box & 26.8 & 18.7 & - & 52.0 & 12.7 & 7.8 & 40.5 & 53.4 & 26.7 & 16.7 & 22.0 & 9.4 & - & - & - & - & - \\
        BetaE & 31.3 & 24.2 & 5.2 & 52.0 & 17.0 & 16.9 & 43.5 & 55.3 & 32.3 & 19.3 & 28.1 & 16.9 & 6.4 & 6.7 & 5.5 & 2.0 & 5.3 \\
        CQD-CO & 39.7 & 26.4 & - & 85.8 & 17.8 & 9.0 & 67.6 & 71.7 & 34.5 & 24.5 & 30.9 & 15.5 & - & - & - & - & - \\
        CQD-Beam & 51.9 & 42.7 & - & 85.8 & 48.6 & 22.5 & 67.6 & 71.7 & 51.7 & 62.3 & 31.7 & 25.0 & - & - & - & - & - \\
        ConE & 39.6 & 33.0 & 7.3 & 62.4 & 23.8 & 20.4 & 53.6 & 64.1 & 39.6 & 25.6 & 44.9 & 21.7 & 9.4 & 9.1 & 6.0 & 4.3 & 7.5 \\
        GNN-QE & 67.3 & 62.2 & 28.6 & 86.1 & \bf{63.5} & \bf{52.5} & 74.8 & \bf{80.1} & 63.6 & 65.1 & 67.1 & 53.0 & 35.4 & 33.1 & 33.8 & 18.6 & \bf{21.8} \\
        \midrule
        QTO & \bf{68.6} & \bf{66.0} & \bf{39.1} & \bf{86.6} & 60.9 & 51.9 & \bf{75.3} & 79.0 & \bf{69.2} & \bf{68.8} & \bf{71.8} & \bf{54.3} & \bf{50.3} & \bf{50.3} & \bf{39.1} & \bf{36.9} & 18.8 \\
        \midrule[0.08em]
        \multicolumn{18}{c}{FB15k-237} \\
        \midrule
        GQE & 8.8 & 4.9 & - & 22.4 & 2.8 & 2.1 & 11.7 & 20.9 & 8.4 & 5.7 & 3.3 & 2.1 & - & - & - & - & - \\
        Query2Box & 12.3 & 7.0 & - & 28.3 & 4.1 & 3.0 & 17.5 & 29.5 & 12.3 & 7.1 & 5.2 & 3.3 & - & - & - & - & - \\
        BetaE & 13.4 & 7.9 & 2.8 & 28.9 & 5.5 & 4.9 & 18.3 & 31.7 & 14.0 & 6.7 & 6.3 & 4.6 & 1.5 & 7.7 & 3.0 & 0.9 & 0.9 \\
        CQD-CO & 14.7 & 9.5 & - & 36.6 & 4.7 & 3.0 & 20.7 & 29.6 & 15.5 & 9.9 & 8.6 & 4.0 & - & - & - & - & - \\
        CQD-Beam & 15.1 & 9.7 & - & 36.6 & 6.3 & 4.3 & 20.7 & 29.6 & 13.5 & 12.1 & 8.7 & 4.3 & - & - & - & - & - \\
        ConE & 15.6 & 9.5 & 2.2 & 31.9 & 6.9 & 5.3 & 21.9 & 36.6 & 17.0 & 7.8 & 8.0 & 5.3 & 1.8 & 3.7 & 3.4 & 1.3 & 1.0 \\
        GNN-QE & 19.1 & 13.0 & 4.3 & 32.8 & 8.2 & 6.5 & 27.7 & 44.6 & 22.4 & 12.3 & 9.8 & 7.6 & 4.1 & 8.1 & 4.1 & 2.5 & \bf{2.7} \\
        \midrule
        QTO & \bf{25.4} & \bf{19.9} & \bf{8.3} & \bf{39.5} & \bf{14.3} & \bf{14.7} & \bf{33.2} & \bf{47.2} & \bf{29.0} & \bf{20.6} & \bf{15.1} & \bf{14.8} & \bf{8.6} & \bf{15.9} & \bf{8.5} & \bf{6.4} & 2.0 \\
        \midrule[0.08em]
        \multicolumn{18}{c}{NELL-995} \\
        \midrule
        GQE & 9.9 & 6.9 & - & 15.4 & 6.7 & 5.0 & 14.3 & 20.4 & 10.6 & 9.0 & 2.9 & 5.0 & - & - & - & - & - \\
        Query2Box & 14.1 & 8.8 & - & 23.8 & 8.7 & 6.9 & 20.3 & 31.5 & 14.3 & 10.7 & 5.0 & 6.0 & - & - & - & - & - \\
        BetaE & 17.8 & 9.6 & 2.1 & 43.5 & 8.1 & 7.0 & 27.2 & 36.5 & 17.4 & 9.3 & 6.9 & 4.7 & 1.6 & 2.2 & 4.8 & 0.7 & 1.2 \\
        CQD-CO & 21.3 & 13.9 & - & 51.2 & 11.8 & 9.0 & 28.4 & 36.3 & 22.4 & 15.5 & 9.9 & 7.6 & - & - & - & - & - \\
        CQD-Beam & 21.0 & 13.2 & - & 51.2 & 14.3 & 6.3 & 28.4 & 36.3 & 18.1 & 17.4 & 10.2 & 7.2 & - & - & - & - & - \\
        ConE & 19.8 & 11.6 & 2.2 & 43.6 & 10.7 & 9.0 & 28.6 & 39.8 & 19.2 & 11.4 & 9.0 & 6.6 & 1.4 & 2.6 & 5.2 & 0.8 & 1.2 \\
        GNN-QE & 21.5 & 13.2 & 3.6 & 43.5 & 12.9 & 9.9 & \bf{32.5} & \bf{42.4} & \bf{23.5} & 12.9 & 8.8 & 7.4 & 3.2 & 5.9 & 5.4 & 1.6 & 2.0 \\
        \midrule
        QTO & \bf{24.8} & \bf{16.7} & \bf{6.1} & \bf{51.6} & \bf{17.1} & \bf{15.3} & 32.1 & 40.8 & 23.1 & \bf{19.9} & \bf{12.3} & \bf{11.3} & \bf{5.6} & \bf{9.4} & \bf{9.9} & \bf{3.5} & \bf{2.1} \\
        \bottomrule
    \end{tabular}
    \end{adjustbox}
    \caption{Test Hits@1 results (\%) on complex query answering across all query types. avg$_p$ is the average on EPFO queries; avg$_{ood}$ is the average on out-of-distribution (OOD) queries; avg$_n$ is the average on queries with negation.}
    \label{tb:h@1}
\end{table*}

Table~\ref{tb:h@1} reports the Hits@1 result on complex query answering.
On Hits@1 metric, QTO outperforms previous Sota method GNN-QE by an average of 8.5\%, 39.8\%, and 20\% over all query types on the three datasets.

\subsection{More Plots on Complex Query Answering w.r.t. 1-hop Query Answering}
\label{app:plot}

\begin{figure}[htbp]
    \centering
    \subfigure[2p-1p]{
        \includegraphics[width=2.1in]{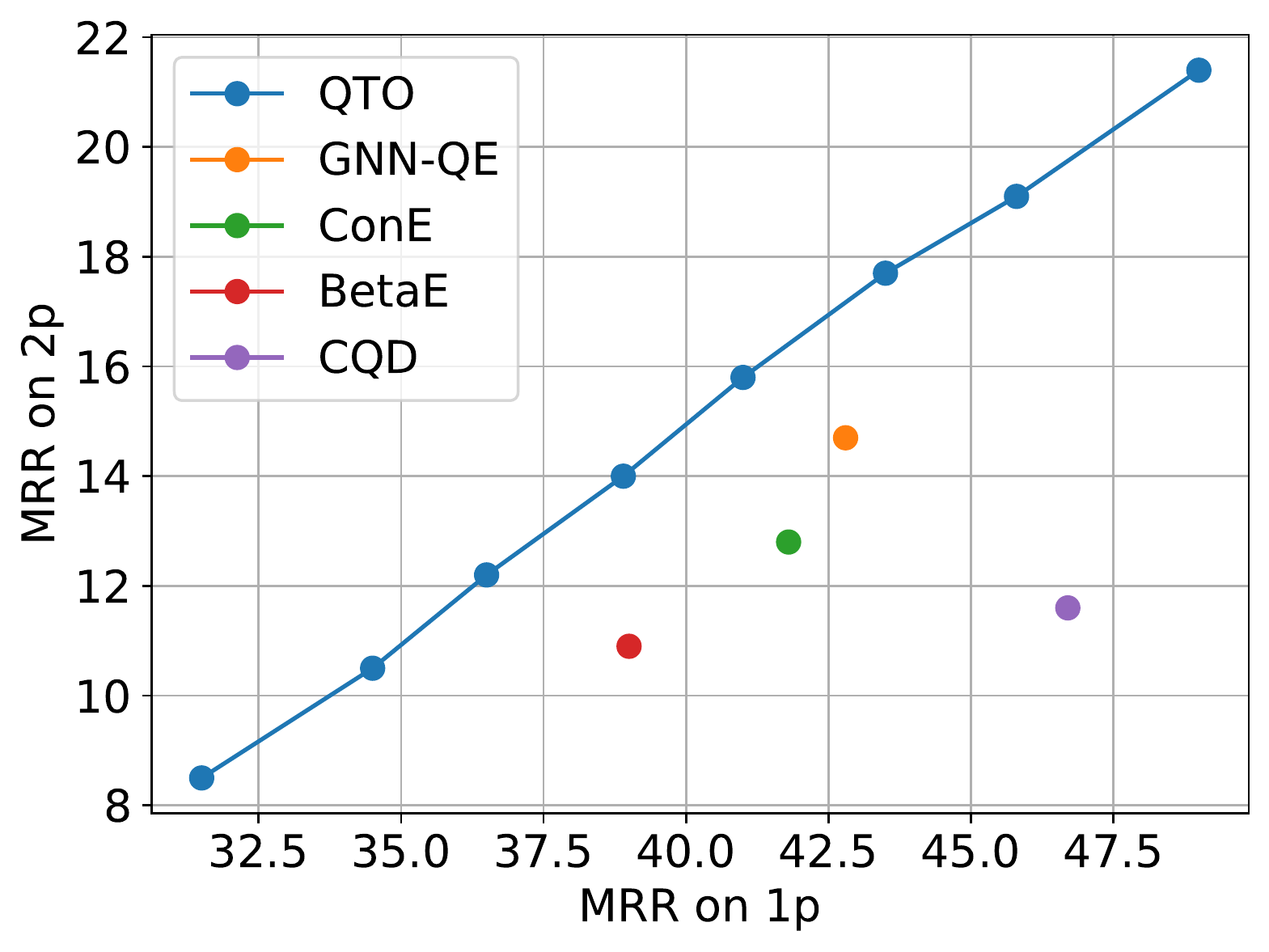}
    }
    \subfigure[3p-1p]{
	\includegraphics[width=2.1in]{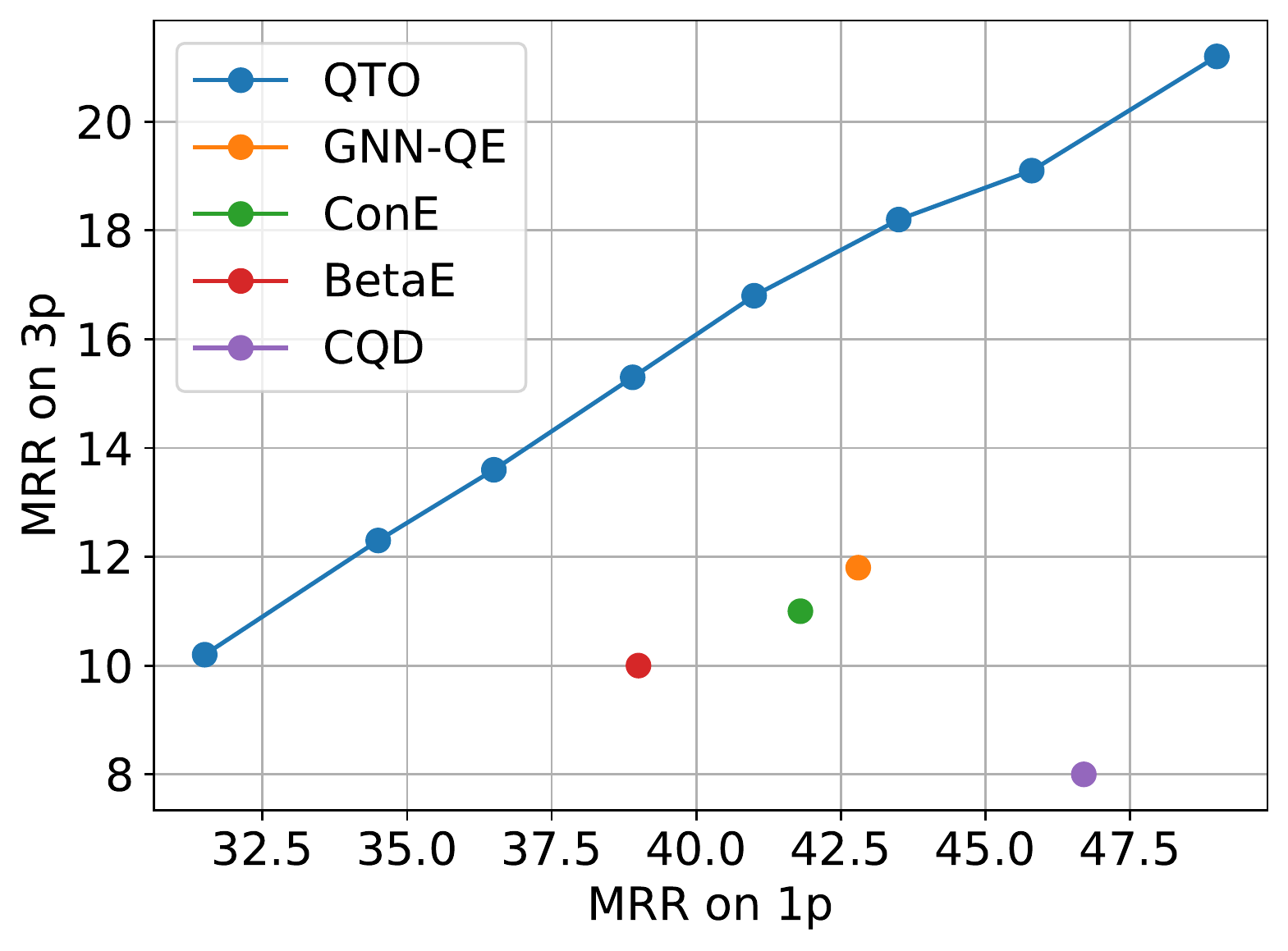}
    }
    \subfigure[2i-1p]{
    	\includegraphics[width=2.1in]{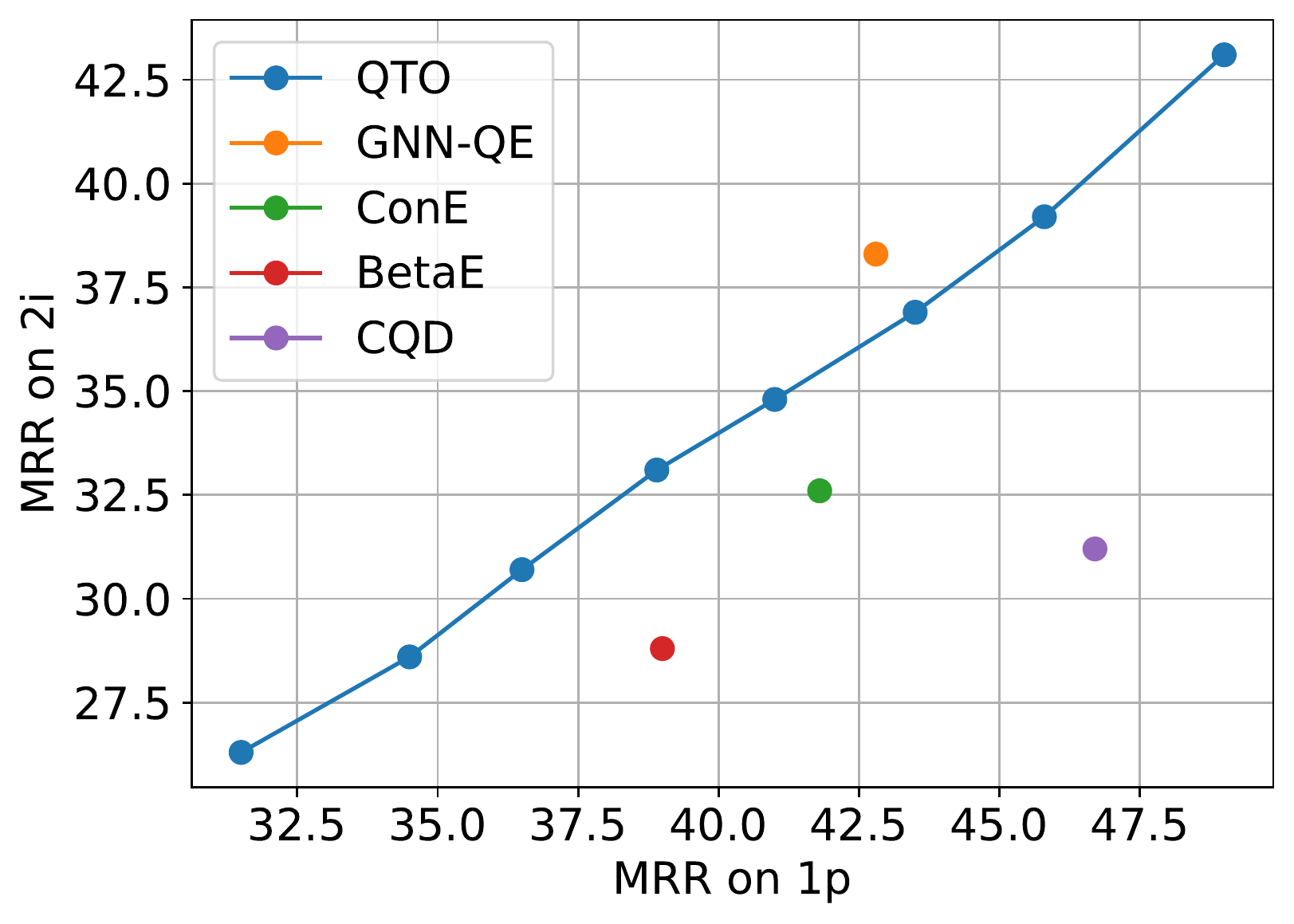}
    }
    \quad
    \subfigure[3i-1p]{
    	\includegraphics[width=2.1in]{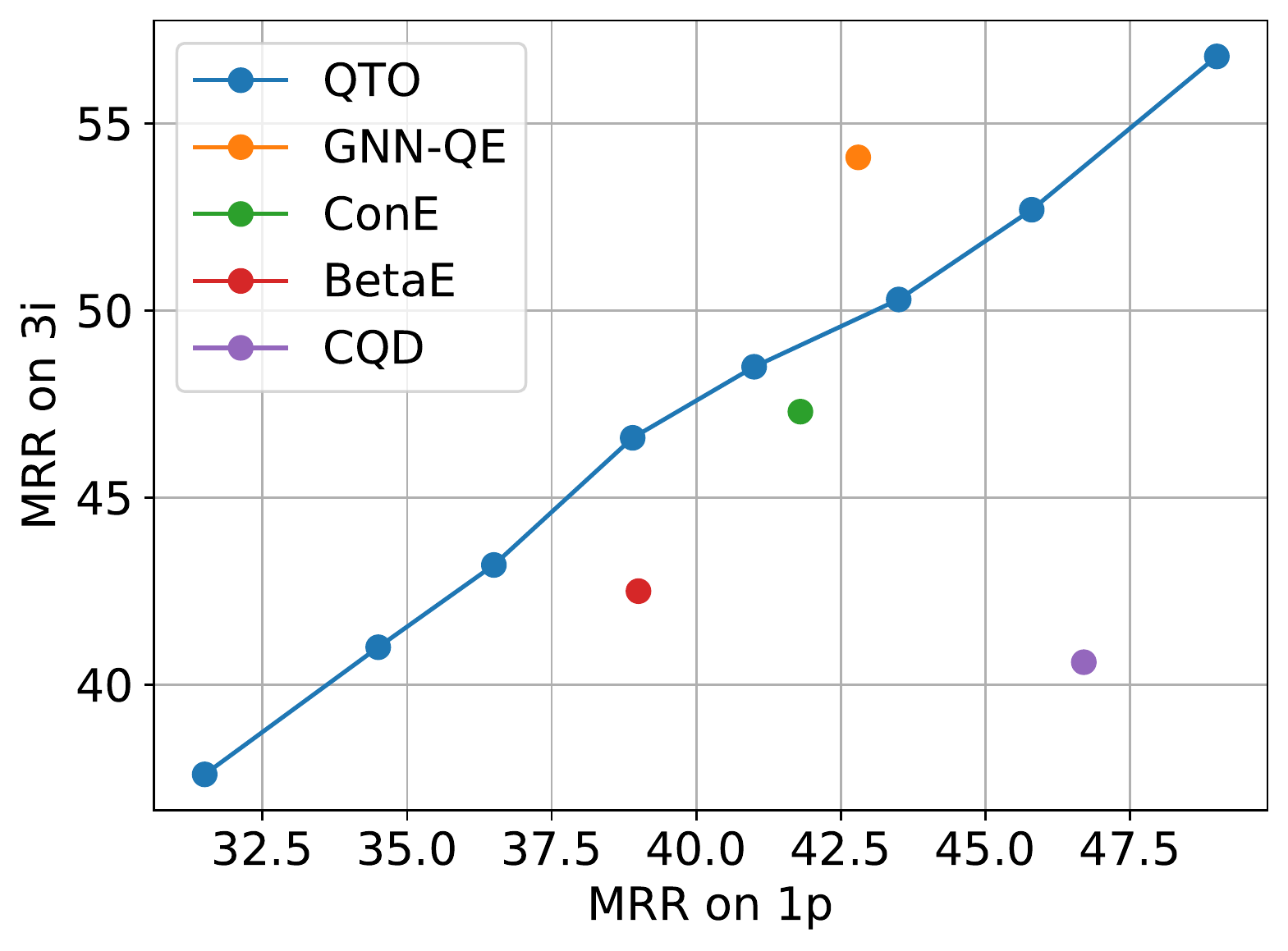}
    }
    \subfigure[pi-1p]{
	\includegraphics[width=2.1in]{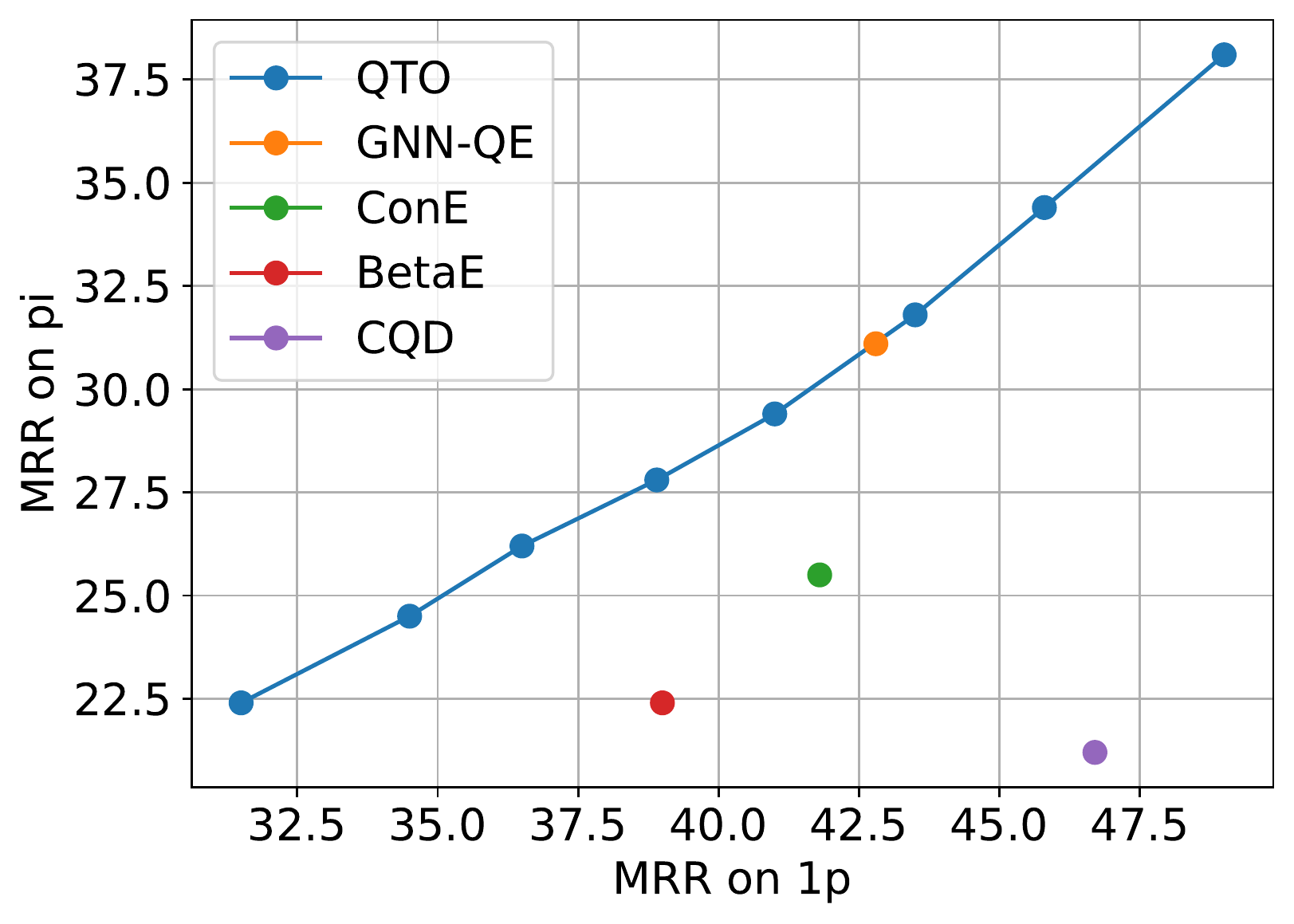}
    }
    \subfigure[ip-1p]{
	\includegraphics[width=2.1in]{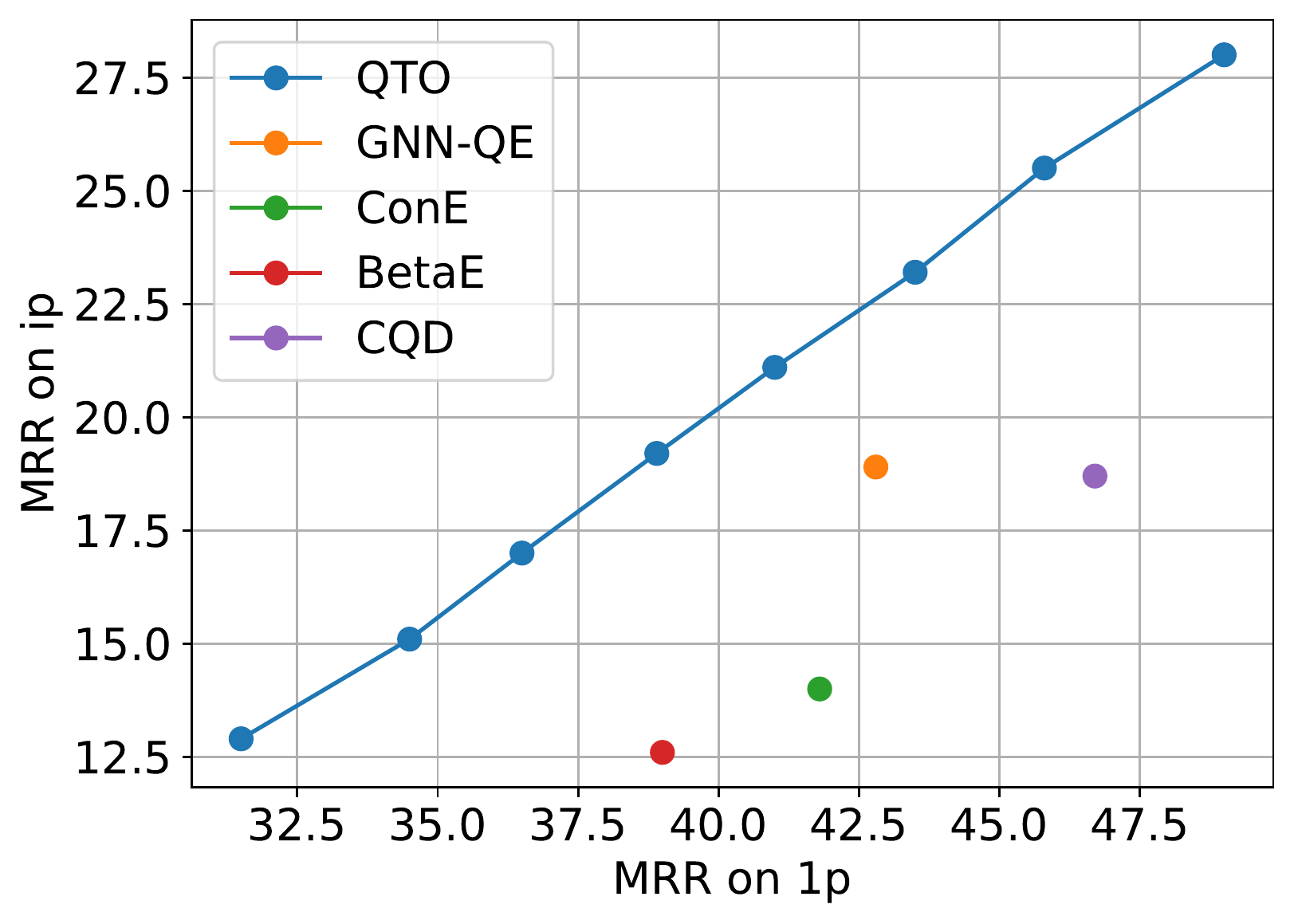}
    }
    \quad
    \subfigure[2u-1p]{
	\includegraphics[width=2.1in]{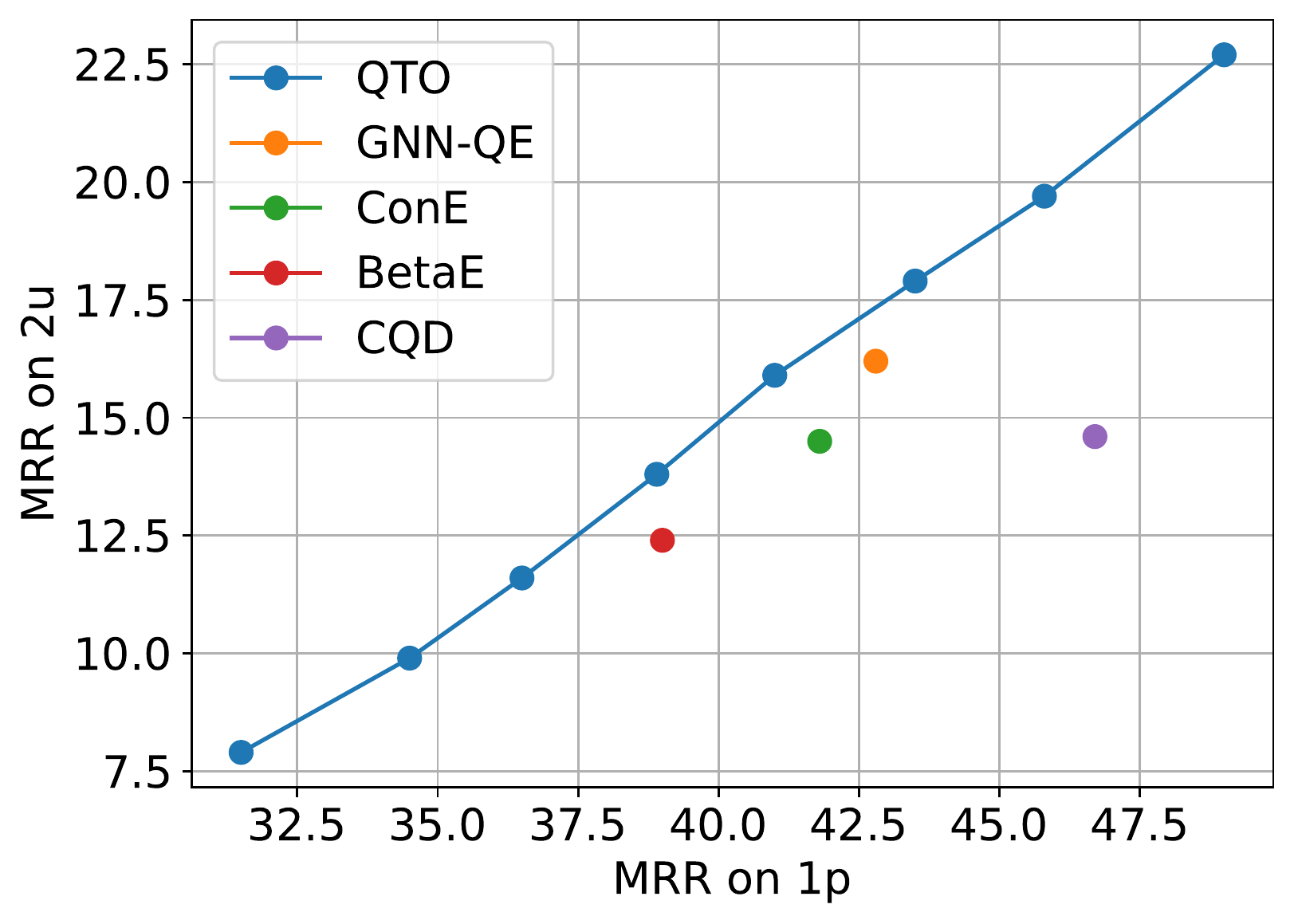}
    }
    \subfigure[up-1p]{
	\includegraphics[width=2.1in]{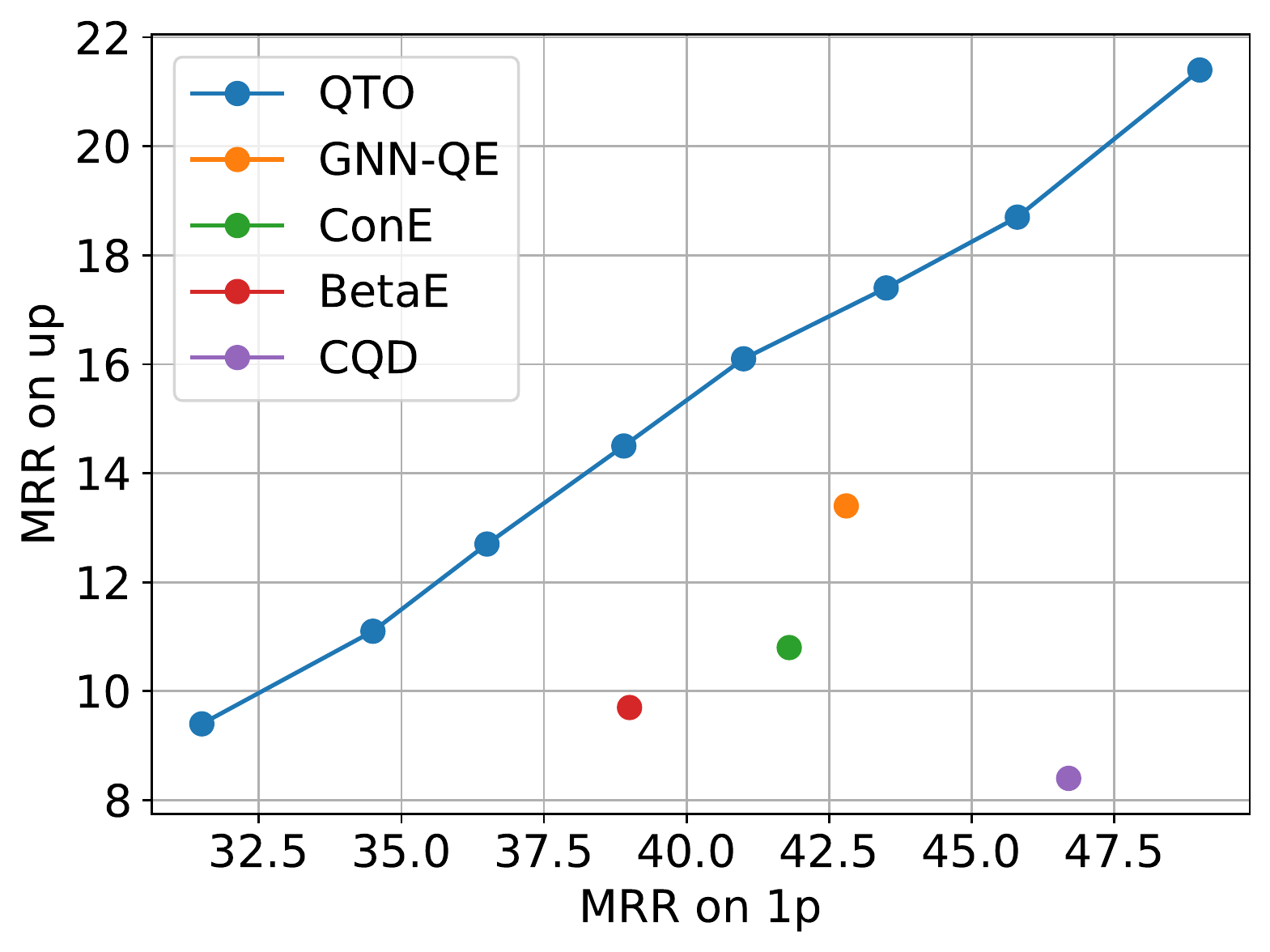}
    }
    \subfigure[2in-1p]{
	\includegraphics[width=2.1in]{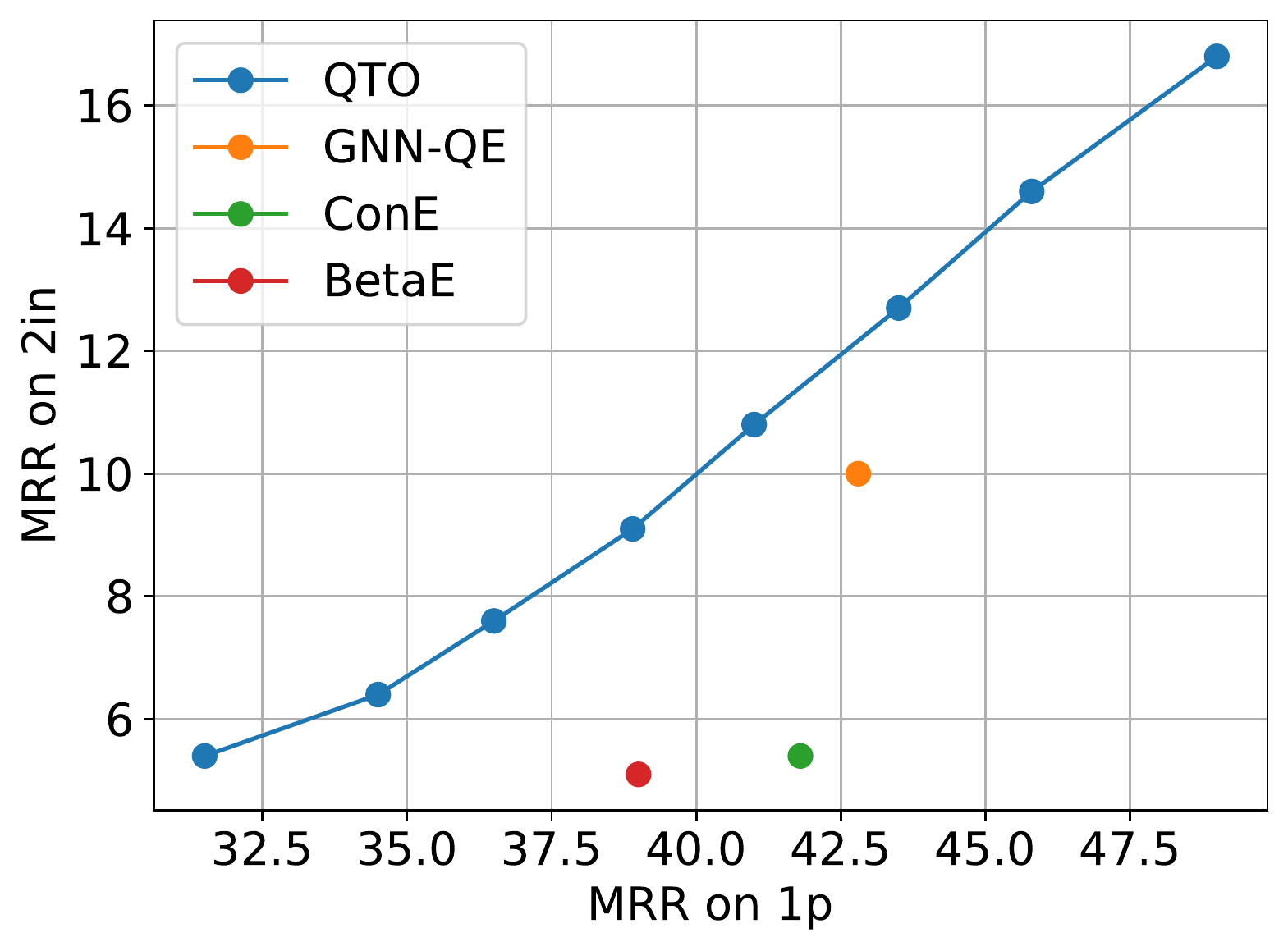}
    }
    \quad
    \subfigure[3in-1p]{
	\includegraphics[width=2.1in]{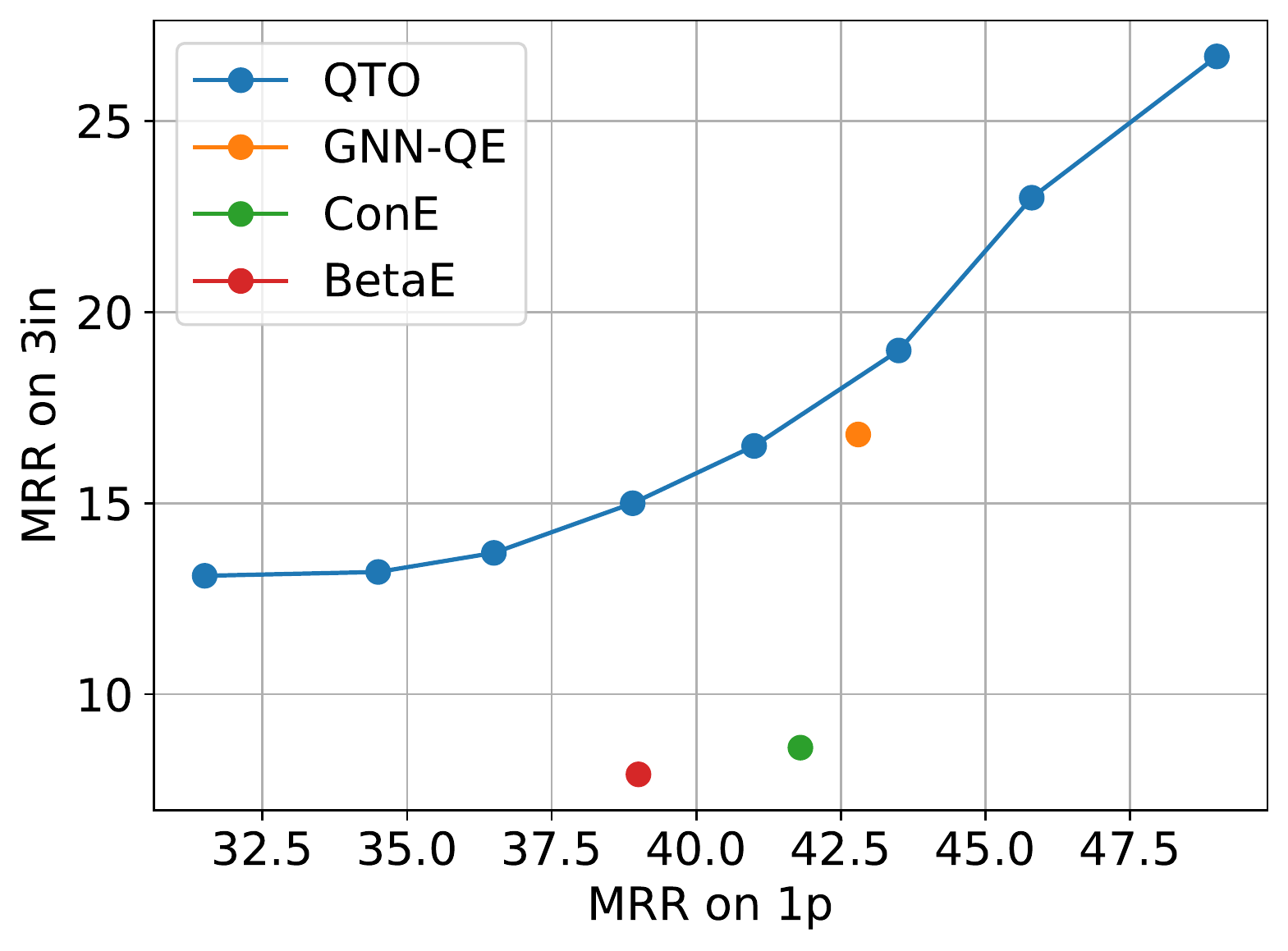}
    }
    \subfigure[inp-1p]{
	\includegraphics[width=2.1in]{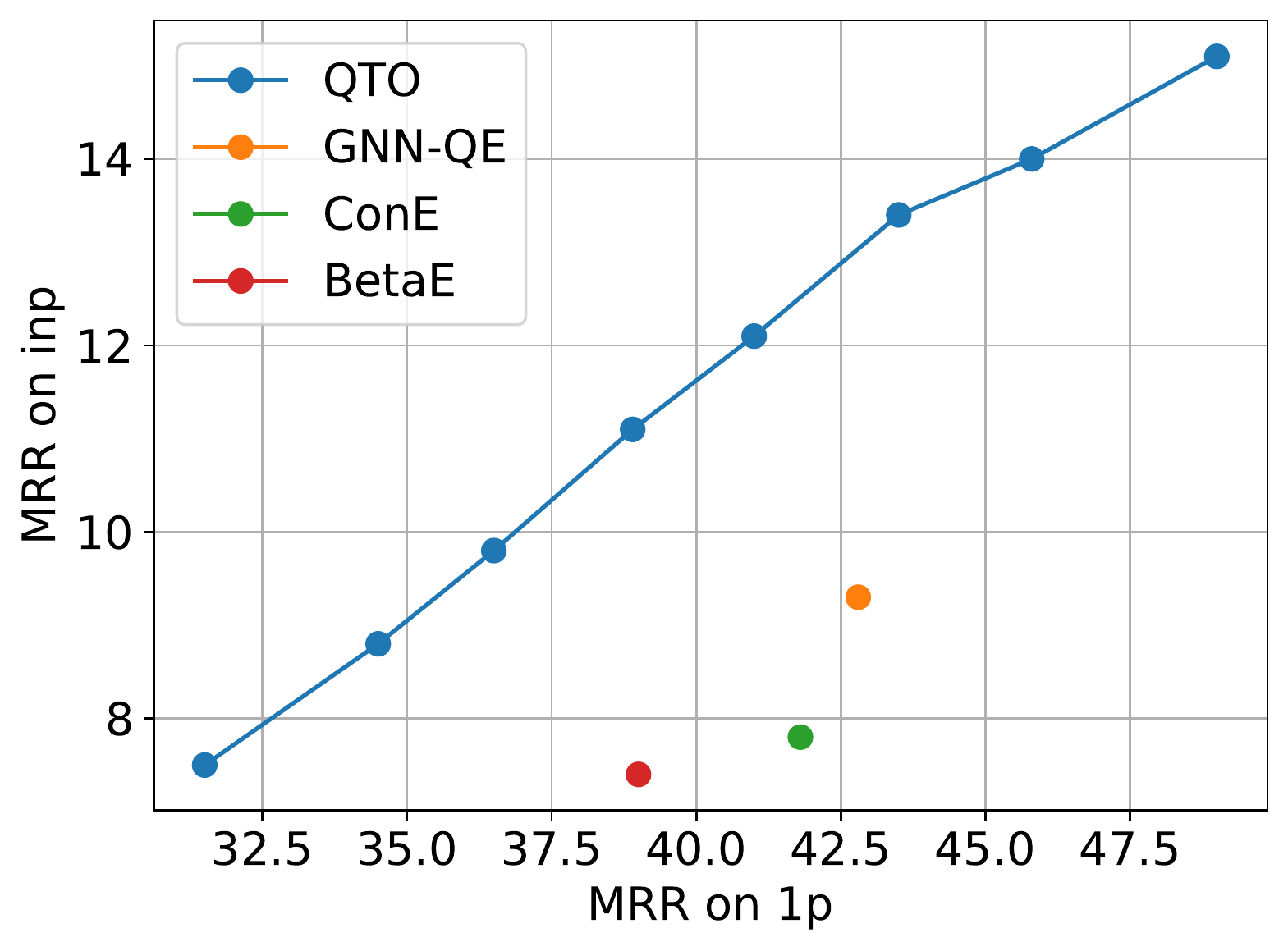}
    }
    \subfigure[pin-1p]{
	\includegraphics[width=2.1in]{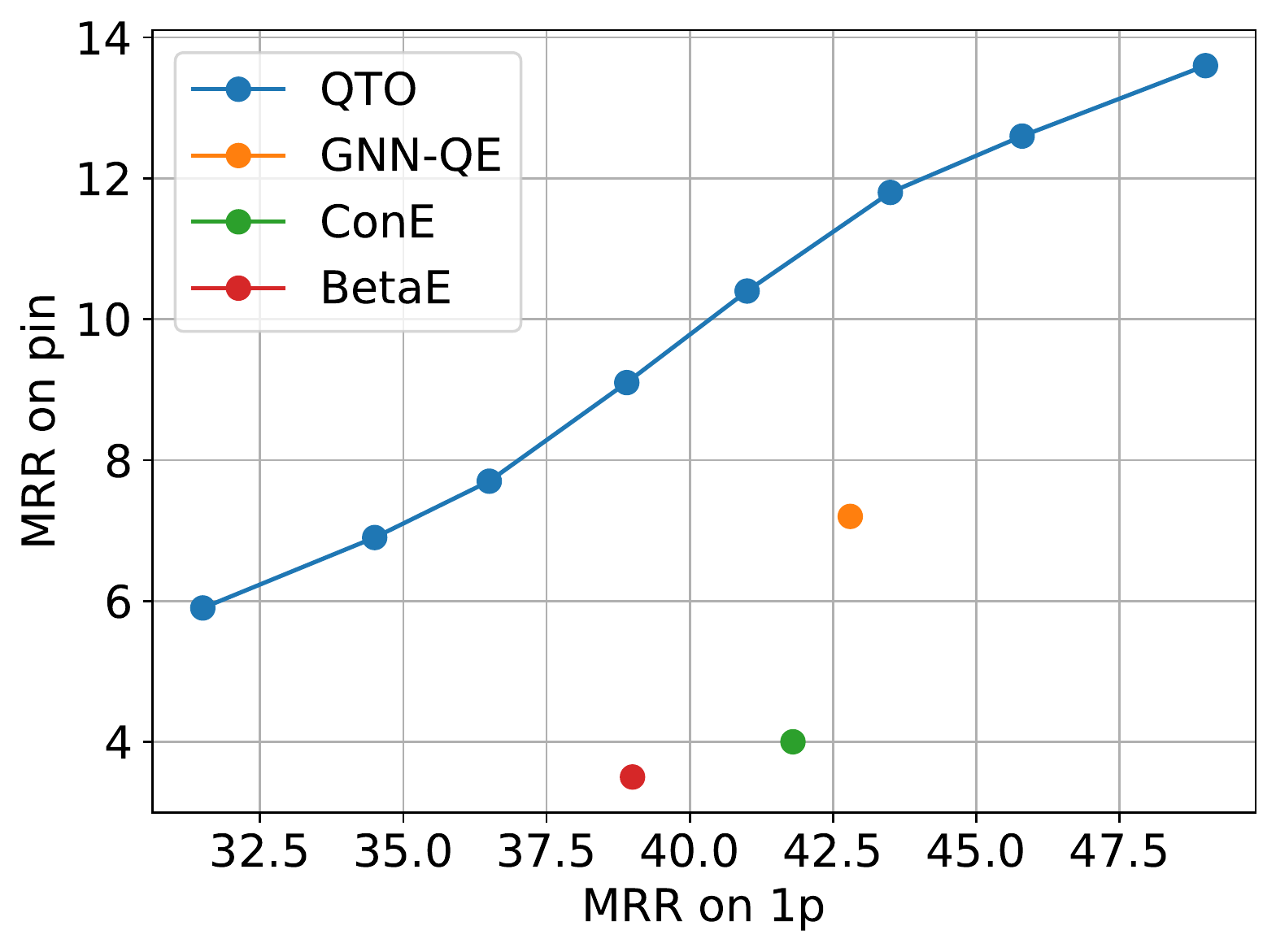}
    }
    \caption{Test MRR on each type of queries w.r.t. MRR on 1p (one-hop) queries, evaluated on FB15k-237.}
    \label{fig:all-1p}
\end{figure}

Figure~\ref{fig:all-1p} reports the Test MRR on each type of complex queries w.r.t. the MRR performance on one-hop queries. We observe a similar trend across all types of queries, verifying our conclusion in Sec.~\ref{sec:result}.

\subsection{Hyperparameter Analysis}
\label{app:hyper}
Table~\ref{tb:eps}, \ref{tb:alpha} shows the effect of $\epsilon$ and $\alpha$ on FB15k-237.

\begin{table}[htbp]
    \centering
    \begin{minipage}{0.4\linewidth}
        \centering
        \begin{tabular}{lcccc}
            \toprule
            $\epsilon$ & $0.1$ & $0.01$ & $0.001$  & $0.0002$ \\
            \midrule
            memory usage & 19M & 82M & 817M & 20G \\
            {\bf avg$_p$} (\%) & 24.5 & 31.5 & 33.1 & 33.5 \\
            \toprule
        \end{tabular}
        \caption{Effect of $\epsilon$ on memory usage and avg$_p$ MRR.}
        \label{tb:eps}
    \end{minipage}
    \qquad
    \begin{minipage}{0.5\linewidth}
        \centering
        \begin{tabular}{lcccccc}
            \toprule
            $\alpha$ & $0.5$ & $1$ & $3$ & $5$ & $7$ & $9$ \\
            \midrule
            {\bf avg$_n$} (\%) & 7.4 & 13.8 & 15.5 & 14.9 & 13.8 & 13.2 \\
            \toprule
        \end{tabular}
        \caption{Effect of $\alpha$ on avg$_n$ MRR.}
        \label{tb:alpha}
    \end{minipage}
\end{table}

\subsection{QTO with Different One-hop Link Predictors}
\label{app:kge}
Table~\ref{tb:kge} reports the performance of QTO with different KGE models on FB15k-237.

\begin{table}[t]
    \centering
    \begin{tabular}{lcccccccccc}
        \toprule
        & {\bf avg$_p$} & {\bf 1p} & {\bf 2p} & {\bf 3p} & {\bf 2i} & {\bf 3i} & {\bf pi} & {\bf ip} & {\bf 2u} & {\bf up} \\
        \midrule
        \multicolumn{11}{c}{QTO's performance under different KGE models} \\
        \midrule
        QTO+TransE & 22.1 & 43.2 & 11.5 & 9.7 & 28.1 & 41.5 & 23.3 & 17.8 & 13.4 & 9.7 \\
        QTO+RotatE & 22.4 & 43.8 & 11.7 & 9.8 & 28.7 & 42.2 & 23.6 & 18.0 & 14.1 & 9.7 \\
        QTO+ComplEx (w/ N3\&RP) & 33.5 & 49.0 & 21.4 & 21.2 & 43.1 & 56.8 & 38.1 & 28.0 & 22.7 & 21.4 \\
        \midrule
        \multicolumn{11}{c}{Comparison with prior methods under the same KGE model} \\
        \midrule
        CQD-Beam & 22.3 & 46.7 & 11.6 & 8.0 & 31.2 & 40.6 & 21.2 & 18.7 & 14.6 & 8.4 \\
        QTO+ComplEx (w/ N3) & 28.7 & 46.7 & 16.0 & 15.1 & 38.7 & 51.2 & 32.8 & 23.6 & 18.1 & 15.6 \\
        GNN-QE & 26.8 & 42.8 & 14.7 & 11.8 & 38.3 & 54.1 & 31.1 & 18.9 & 16.2 & 13.4 \\
        QTO+NBFNet & 27.3 & 51.7 & 17.5 & 13.3 & 31.8 & 43.0 & 28.9 & 24.2 & 22.3 & 12.6 \\
        \bottomrule
    \end{tabular}
    \caption{MRR (\%) on FB15k-237, RP is short for the relation prediction task~\cite{chen2021relation} during KGE training.}
    \label{tb:kge}
\end{table}

\subsection{More Results on Interpretation Study}
\label{app:inter}

\begin{table}[t]
    \centering
    \begin{tabular}{lcccccccc}
        \toprule
        & \bf{2p} & \bf{3p} & \bf{pi} & \bf{ip} & \bf{up} & \bf{inp} & \bf{pin} & \bf{pni} \\
        \midrule
        \multicolumn{9}{c}{FB15k} \\
        \midrule
        on Hits@1 & 98.2 & 96.3 & 98.2 & 98.2 & 98.2 & 96.2 & 98.6 & 97.8 \\
        on Hits@3 & 97.4 & 94.8 & 97.3 & 97.1 & 97.3 & 94.6 & 97.7 & 98.3 \\
        on Hits@10 & 96.0 & 92.7 & 96.6 & 96.0 & 95.7 & 92.8 & 96.3 & 98.7 \\
        on All & 90.5 & 85.3 & 94.1 & 91.7 & 89.2 & 81.8 & 90.5 & 97.7 \\
        \midrule
        \multicolumn{9}{c}{NELL995} \\
        \midrule
        on Hits@1 & 90.7 & 81.8 & 92.3 & 92.2 & 91.1 & 74.9 & 89.7 & 91.5 \\
        on Hits@3 & 88.3 & 77.2 & 91.6 & 89.9 & 88.8 & 71.5 & 86.4 & 93.8 \\
        on Hits@10 & 86.1 & 75.2 & 90.8 & 88.3 & 85.6 & 68.6 & 84.5 & 95.2 \\
        on All & 76.4 & 64.4 & 81.5 & 82.4 & 77.2 & 54.5 & 74.3 & 95.1 \\
        \bottomrule
    \end{tabular}
    \caption{Accuracy (\%) of intermediate variable interpretation given the Hits@K answers that QTO predicts and all true answers, evaluated on FB15k and NELL995.}
    \label{tb:interpretability1}
\end{table}

Table~\ref{tb:interpretability1} reports the accuracy on intermediate variable interpretation on FB15k and NELL995.
We observe that QTO can provide valid explanations for over 90\% of the Hits@1 answers it predicts.

\subsection{Case Study on QTO's Interpretation}
\label{app:case}

\begin{figure}[htbp]
    \centering
    \subfigure[Case study on 2p query]{
        \includegraphics[width=1\linewidth,trim=50 120 50 120,clip]{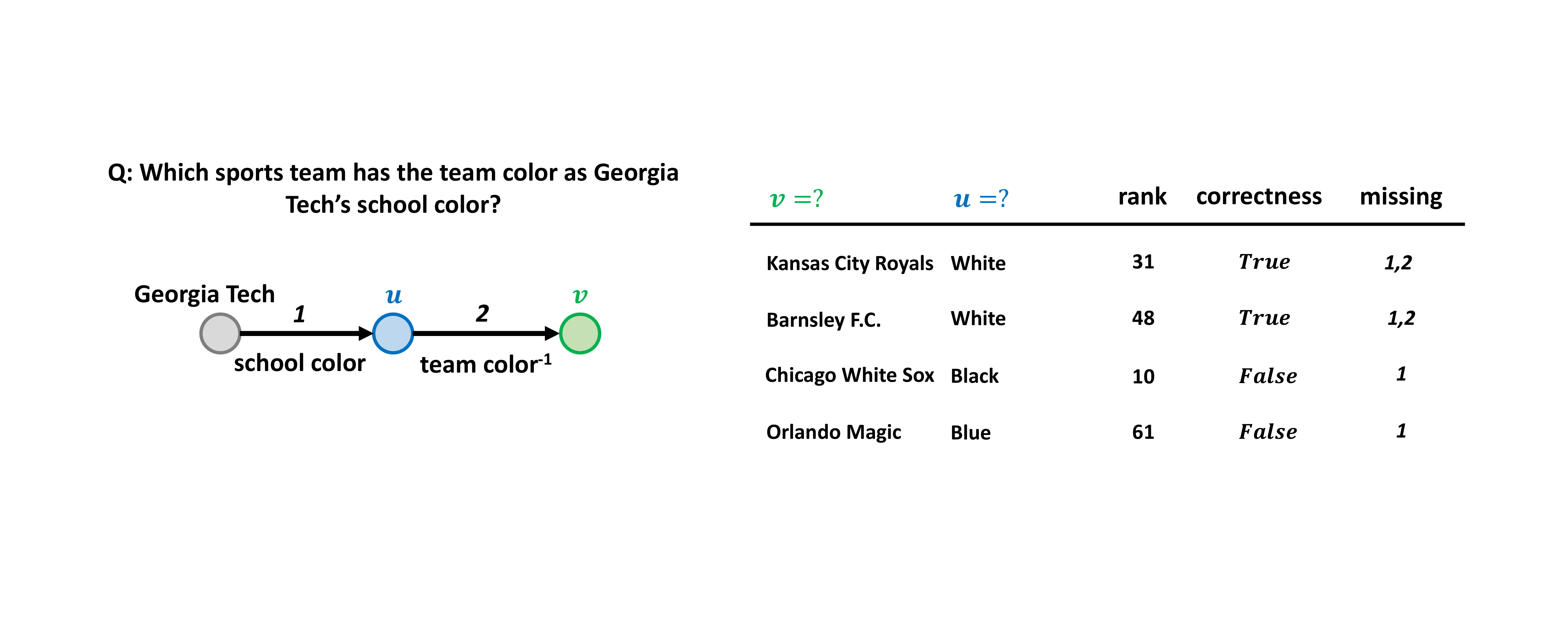}
    }
    \quad
    \subfigure[Case study on pi query]{
        \includegraphics[width=1\linewidth,trim=50 120 50 120,clip]{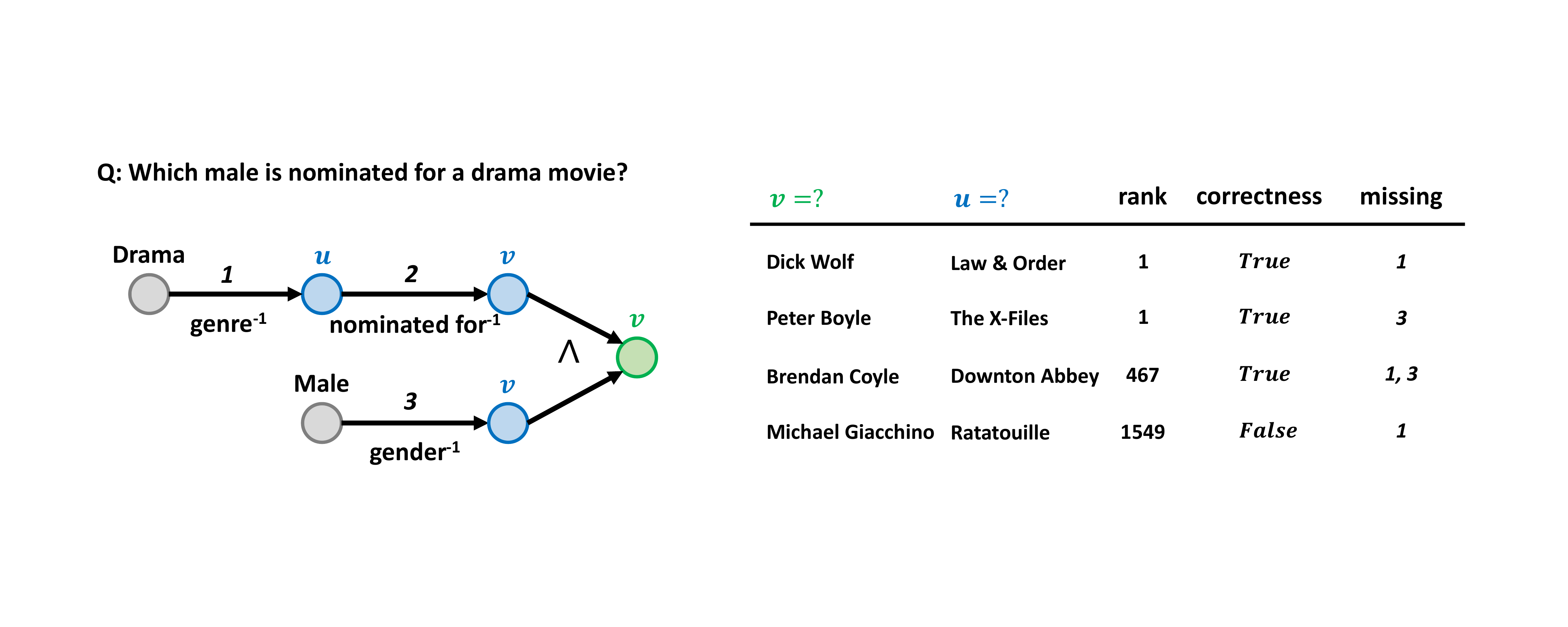}
    }
    \quad
    \subfigure[Case study on ip query]{
        \includegraphics[width=1\linewidth,trim=50 120 50 120,clip]{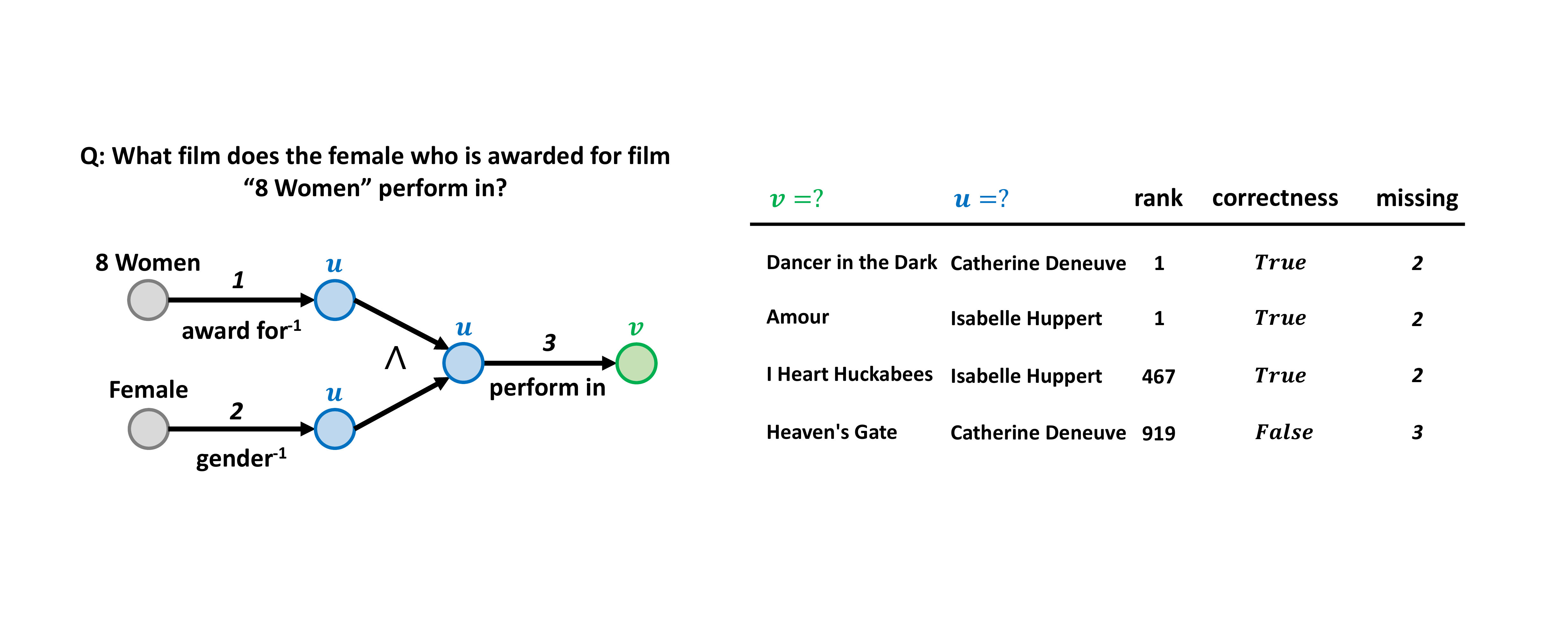}
    }
    \quad
    \subfigure[Case study on pni query]{
        \includegraphics[width=1\linewidth,trim=50 120 50 120,clip]{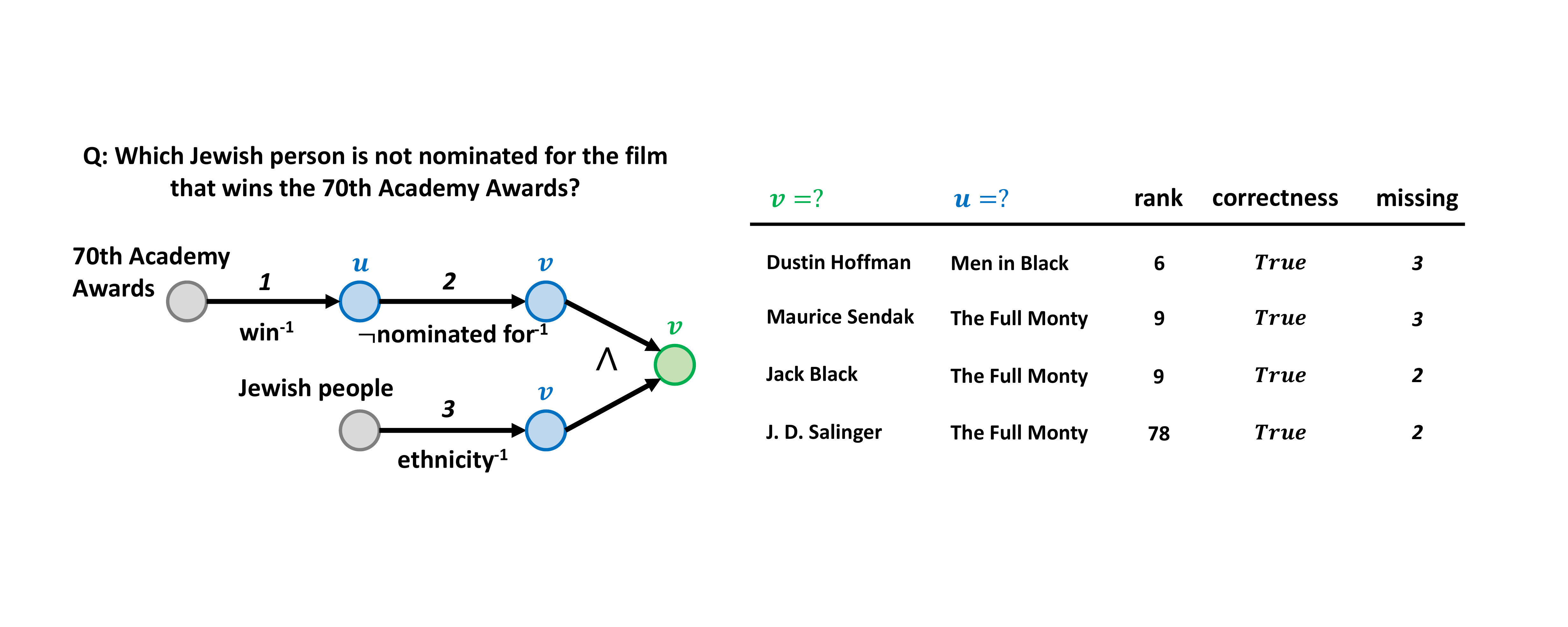}
    }
\caption{Case study of QTO's interpretation.}
\label{fig:case}
\end{figure}

We report case studies on QTO's interpretation in Figure~\ref{fig:case}. We include both success cases, where the assignments can be verified by the full graph, and failure cases where the assignments are not viable.
For each set of assignments, we provide the rank of its answer in QTO's prediction, the correctness of the assignments, and the missing link in the training graph that need to be predicted for deducing the answer.

\subsection{More Results on Cardinality Prediction}
\label{app:card}

\begin{table}[!h]
    \centering
    \begin{tabular}{lcccccccccc}
        \toprule
        \bf{Method} & \bf{avg} & \bf{1p} & \bf{2p} & \bf{3p} & \bf{2i} & \bf{3i} & \bf{pi} & \bf{ip} & \bf{2u} & \bf{up} \\
        \midrule
        \multicolumn{11}{c}{FB15k} \\
        \midrule
        GNN-QE & 37.1 & 34.4 & 29.7 & 34.7 & 39.1 & 57.3 & 47.8 & 34.6 & 13.5 & 26.5 \\
        QTO & \bf{23.1} & \bf{11.7} & \bf{24.1} & \bf{29.5} & \bf{25.5} & \bf{27.2} & \bf{29.3} & \bf{25.2} & \bf{12.4} & \bf{22.7} \\
        \midrule
        \multicolumn{11}{c}{NELL995} \\
        \midrule
        GNN-QE & 44.0 & 61.9 & 38.2 & 47.1 & 56.6 & 72.3 & 49.5 & 45.8 & 19.9 & 36.2 \\
        QTO & \bf{37.6} & \bf{40.1} & \bf{32.3} & \bf{39.3} & \bf{48.2} & \bf{60.7} & \bf{37.7} & \bf{42.4} & \bf{16.8} & \bf{20.5} \\
        \bottomrule
    \end{tabular}
    \caption{MAPE (\%, $\downarrow$) on answer set cardinality prediction, evaluated on FB15k and NELL995.}
    \label{tb:mape1}
\end{table}

Table~\ref{tb:mape1} reports the mean absolute percentage error (MAPE) on cardinality prediction on FB15k and NELL995.
We see that QTO outperforms GNN-QE by a large margin on these two datasets, and the advantage is consistent across all types of queries.

\end{document}